\documentclass{article}
\pdfoutput=1
\usepackage[numbers]{natbib}



\usepackage[final]{nips_2018}


\usepackage[utf8]{inputenc} 
\usepackage[T1]{fontenc}    
\usepackage{hyperref}       
\usepackage{url}            
\usepackage{booktabs}       
\usepackage{amsfonts}       
\usepackage{nicefrac}       
\usepackage{microtype}      
\usepackage{algorithm,algorithmic}
\usepackage{amsmath}
\usepackage[pdftex]{graphicx}

\usepackage{amsthm}

\newtheorem{theorem}{Theorem}
\newtheorem{hypothesis}{Hypothesis}

\title{Backprop-Q: Generalized Backpropagation for Stochastic Computation Graphs}

%
\author{
    Xiaoran Xu$^1$,\; Songpeng Zu$^1$,\; Yuan Zhang$^2$\thanks{Work done during the internship in Hulu},\; Hanning Zhou$^1$,\; Wei Feng$^1$ \\
  $^1$Hulu Innovation Lab, Beijing, China \\
  \texttt{\{xiaoran.xu, songpeng.zu, eric.zhou, wei.feng\}@hulu.com} \\
  $^2$School of Electronics Engineering and Computer Science, Peking University, Beijing, China \\
  \texttt{yuan.z@pku.edu.cn} \\
}
 
\begin{document}

\maketitle

\begin{abstract}
In real-world scenarios, it is appealing to learn a model carrying out stochastic operations internally, known as \emph{stochastic computation graphs} (SCGs), rather than learning a deterministic mapping. However, standard backpropagation is not applicable to SCGs. We attempt to address this issue from the angle of cost propagation, with local surrogate costs, called \emph{Q-functions}, constructed and learned for each stochastic node in an SCG. Then, the SCG can be trained based on these surrogate costs using standard backpropagation. We propose the entire framework as a solution to generalize backpropagation for SCGs, which resembles an actor-critic architecture but based on a graph. For broad applicability, we study a variety of SCG structures from one cost to multiple costs. We utilize recent advances in reinforcement learning (RL) and variational Bayes (VB), such as off-policy critic learning and unbiased-and-low-variance gradient estimation, and review them in the context of SCGs. The generalized backpropagation extends transported learning signals beyond gradients between stochastic nodes while preserving the benefit of backpropagating gradients through deterministic nodes. Experimental suggestions and concerns are listed to help design and test any specific model using this framework.
\end{abstract}
\section{Introduction}

The \emph{Credit assignment problem} has been seen as the fundamental learning problem. Given a long chain of neuron connections, it studies how to assign "credit" to early-stage neurons for their impact on final outcome through downstream connections. It dates back to a well-known approach, called \emph{backpropagation} \cite{Rumelhart1986_LearningRepByBackpropErr}. Error signals are propagated from the output layer to hidden layers, guiding weight updates. The "credit" is a signal of loss gradient calculated by the chain rule. Meanwhile, some work attempts to seek an alternative to exact gradient computation, either by finding a biologically plausible implementation \cite{Bengio2015_TowardsBioPlausibleDL,Lee2015_DifferenceTargetProp,Lillicrap2016_RandomSynapticFeedbackWeights,Nokland2016_DirectFeedbackAlignment}, or using synthetic gradients \cite{Jaderberg2016_DecoupledNeuralInterfaces}. The insight is that building feedback pathways may play a more crucial role than assuring the preciseness of propagated gradients. More specifically, instead of gradients, feedback learning signals can be target values \cite{Lee2015_DifferenceTargetProp}, synthetic gradients \cite{Jaderberg2016_DecoupledNeuralInterfaces}, or even signals carried through random feedback weights \cite{Lillicrap2016_RandomSynapticFeedbackWeights,Nokland2016_DirectFeedbackAlignment}.

However, the great success of deep neural networks in a variety of real-world scenarios is largely attributed to the standard gradient-based backpropagation algorithm due to its effectiveness, flexibility, and scalability. The major weakness is its strict requirement that neural networks must be deterministic and differentiable, with no stochastic operations permitted internally. This limits the potential of neural networks for modeling large complex stochastic systems. Therefore, rather than figuring out an alternative to backpropagation, we aim at extending it to become applicable to arbitrary stochastic computation graphs. Specifically, we propose to conduct the propagation process across stochastic nodes, with propagated learning signals beyond gradients, while preserving the benefit of standard backpropagation when transporting error gradients through the differentiable part.

Recently, many efforts have focused on solving the tasks that require effective training by backpropagation along with sampling operations, called \emph{backpropagation through stochastic neurons}. As one of the early work, \cite{Bengio2013_EstimatingOrPropagating} studied four families of solutions to estimate gradients for stochastic neurons, including the straight-through estimator, but limited to binary neurons. 

In variational inference and learning, training with samples arises from the fact that it optimizes an expectation-form objective, a variational lower bound, with respect to distribution parameters. Based on Monte Carlo sampling, several unbiased and low-variance estimators have been proposed for continuous and discrete random variables, using the techniques such as the reparameterization trick \cite{Kingma2014_VAE,Rezende2014_DLGM,Ruiz2016_GREP}, control variates \cite{Mnih2014_NVIL,Paisley2012_VBI,Ranganath2013_BBVI}, continuous relaxation \cite{Maddison2017_Concrete,Jang2017_GumbelSoftmax} and most recently, hybrid methods combining the previous techniques \cite{Gu2016_MuProp,Tucker2017_Rebar,Grathwohl2018_RELAX}. However, these methods studied a direct cost $f(z)$ defined on random variables, without systematically considering the effect of long-delayed costs after a series of stochastic operations, which is the key of the credit assignment problem. 

In reinforcement learning, a Markov decision process can be viewed as a chain of stochastic actions and states, and the goal is to maximize the expected total rewards, with delayed rewards considered. The temporal-difference (TD) learning method \cite{Sutton2017_RL}, along with policy gradient methods \cite{Williams1992,Sutton2000_PolicyGradient} and various on- and off-policy techniques, such as experience replay \cite{Mnih2015_HumanLevel,Lillicrap2016_DDPG,Gu2017_QProp}, separate target network \cite{Mnih2015_HumanLevel,Mnih2016_A3C,Lillicrap2016_DDPG,Gu2017_QProp}, advantage function \cite{Sutton2000_PolicyGradient,Schulman2016_GAE} and controlled policy optimization \cite{Schulman2015_TRPO,Schulman2017_PPO}, provide a powerful toolbox to solve temporal credit assignment \cite{suttons1984}. However, rare work has thought of reinforcement learning from a nonsequential perspective, for example, a more structured decision graph, made of a mix of policy networks, with various value functions interwoven and learned jointly.

The learning problem for SCGs was first clearly formulated in \cite{Schulman2015_SCG}, solved by a modification of standard backpropagation, much like a graph-based policy gradient method without critic learning. Inspired by this work, we study the possibility of backpropagating value-based signals in TD-style updates, as a complement to gradient-based signals, and propose a more generalized framework to implement backpropagation, called \emph{Backprop-Q}, applicable to arbitrary SCGs, absorbing many useful ideas and methods recently introduced in RL and VB. 

In this paper, our contributions mainly focus on two aspects: (1) cost propagation and (2) how to construct and learn local surrogate costs.  For cost propagation, to transport expectation of costs back through stochastic nodes, we introduce a Backprop-Q network associated with a set of tractable sample-based update rules. For local surrogate costs, we parameterize each by a neural network with compact input arguments, analogous to a critic (or a value function) in reinforcement learning. To the best of our knowledge, this paper is the first to consider learning critic-like functions from a graph-based view. Combined with standard backpropagation, our work depicts a big picture where feedback signals can go across stochastic nodes and go beyond gradients. 

The primary purpose of this paper is to provide a learning framework with wide applicability and offer a new path to training arbitrary models that can be represented in SCGs, at least formally. In practice, much future work needs to be done to examine what specific type of SCG problems can be solved effectively and what trick needs to be applied under this framework. Despite lack of experimental demonstration, we list possible suggestions and concerns to conduct future experiments.

\section{Preliminary}

\textbf{Stochastic computation graphs (SCGs).} We follow Schulman's \cite{Schulman2015_SCG} definition of SCGs, and represent an SCG as $(\mathcal{X},\mathcal{G}_\mathcal{X}, \mathcal{P}, \Theta, \mathcal{F}, \Phi)$. $\mathcal{X}$ is the set of random variables, $\mathcal{G}_\mathcal{X}$ the directed acyclic graph on $\mathcal{X}$, $\mathcal{P} = \{ p_{\scriptscriptstyle X} (\cdot | {Pa}_{\scriptscriptstyle X}; \theta_{\scriptscriptstyle X}) \mid X \in \mathcal{X}\}$ the set of conditional distribution functions parameterized by $\Theta$, and $\mathcal{F} = \{f_i(\mathcal{X}_i; \phi_i) \mid \mathcal{X}_i \subseteq \mathcal{X} \}$ the set of cost functions parameterized by $\Phi$. Although an SCG contains parameter nodes (including inputs), deterministic nodes and stochastic nodes, for simplicity we leave out notations for deterministic nodes as they are absorbed into functions $p_{\scriptscriptstyle X}$ or $f_i$. Note that $\mathcal{G}_\mathcal{X}$ and $\mathcal{P}$ make a probabilistic graphical model (PGM) such that the feedforward computation through SCG performs ancestral sampling. However, an SCG expresses different semantics from a PGM by $\mathcal{G}_\mathcal{X}$ in two aspects: 1) it contains costs; 2) the detail of deterministic nodes and their connections to other nodes reveals a finer modeling of computation dependencies omitted by $\mathcal{G}_\mathcal{X}$. Furthermore, due to the flexibility of expressing dependencies in SCGs, parameters can be shared or interacted across $\Theta, \Phi$ without being limited to local parameters.

\textbf{Learning problem in SCGs.} The learning problem in an SCG is formulated as minimizing an expected total cost $J(\Theta,\Phi) = \mathbb{E}_{\mathcal{X} \sim \mathcal{P};\Theta} [\sum f_i(\mathcal{X}_i; \phi_i)]$ over distribution parameters in $\Theta$ and cost parameters in $\Phi$ jointly. $J$ is usually intractable to compute and therefore approximated by Monte Carlo integration. When applying stochastic optimization, the stochasticity arises not only from mini-batch data but also from sampling procedure, resulting in imprecision and difficulty, compared to optimizing a deterministic neural network. However, SCGs apply to a much wider variety of tasks as long as their objective functions can be written in expectation.

\textbf{SCGs for probabilistic latent models.} For probabilistic latent models, the formulation using SCGs has two different ways: sampling via generative models or via inference networks. The former fits a latent-variable model by maximizing likelihood $p(x;\theta) = \mathbb{E}_{p(z;\theta)}[p(x|Z; \theta)]$ for a single observation. The latter is more popular, known as variational Bayes \cite{Mnih2014_NVIL,Kingma2014_VAE,Rezende2014_DLGM}. Here, the inference network acts as an SCG that performs actual sampling, and the generative model only provides probabilistic functions to help define a variational lower bound as the SCG's cost, with the approximate posterior as well. The final expected cost will be $\mathbb{E}_{q(z|x;\phi)}[\log p(Z;\theta) + \log p(x|Z;\theta) - \log q(Z|x;\phi)]$.

\textbf{SCGs for reinforcement learning.} SCGs can be viewed in the sense of reinforcement learning under known deterministic transition. For each stochastic node $X$, $p_{\scriptscriptstyle X}(x|{Pa}_{\scriptscriptstyle X}; \theta_{\scriptscriptstyle X})$ is a policy where $x$ means an action and ${Pa}_{\scriptscriptstyle X}$ means the state to take action $x$. Whenever an action is taken, it possibly becomes part of a state for next actions taken at downstream stochastic nodes. Although it simplifies reinforcement learning without considering environment dynamics, it is no longer a sequential decision process, but a complex graph-based decision making, which integrates various policies, with rewards or costs coming from whatever branch leading to a cost function.

\textbf{SCGs for stochastic RNNs.} Traditional RNNs build a deterministic mapping from inputs to predictions, resulting in exposure bias when modeling sentences \cite{Ranzato2016_SeqLevelTrain}. It is trained by ground truth words as oppose to words drawn from the model distribution. Using a stochastic RNN, an instance of SCGs, can overcome the issue, because the next word is sampled based on its previous words.
\section{Basic Framework of Backprop-Q}

In this section, we first demonstrate how to construct local surrogate costs and derive their update rules in one-cost SCGs. Then, we extend our methods to multi-cost SCGs with arbitrary structure.

\subsection{One-Cost SCGs}

\textbf{Cost propagation.} Why is cost propagation needed? If we optimize $\mathbb{E}_{{\scriptscriptstyle X} \sim p(\cdot;\theta)}[f(X)]$ over $\theta$, we can get an unbiased gradient estimator by applying the REINFORCE \cite{Williams1992} directly. However, considering a long chain with an objective $\mathbb{E}_{\scriptscriptstyle X_{1:t-1}}[ \mathbb{E}_{\scriptscriptstyle X_t|X_{1:t-1}} [ \mathbb{E}_{\scriptscriptstyle X_{t+1:T}|X_t} [f(X_{\scriptscriptstyle T})]]]$, a given $x_t$ is supposed to be associated with the conditional expected cost $\mathbb{E}_{{\scriptscriptstyle X_{t+1:T}}|x_t}[f(X_{\scriptscriptstyle T})]$, rather than a delayed $f(x_{\scriptscriptstyle T})$. The REINFORCE estimator is notorious for high variance due to the sampling-based approximation for $\mathbb{E}_{{\scriptscriptstyle X}_{t}|x_{1:t-1}}[\cdot]$ given $x_{1:t-1}$, and using $f(x_{\scriptscriptstyle T})$ after sampling over $X_{t+1:T}$ across a long chain will make it much worse. Unlike \cite{Schulman2015_SCG} without addressing this issue, we aim at learning expected costs conditioned on each random variable and using Rao-Blackwellization \cite{Casella1996} to reduce variance due to $\text{Var}(\mathbb{E}_{\scriptscriptstyle Y|X}[f(Y)]) \le \text{Var}(f(Y))$. We find that these expected costs follow a pattern of computing expectation updates on one random variable each time, starting from the cost and flowing backward through all random variables.

\textbf{Local surrogate costs.} In a chain-like SCG, cost propagation based on expectation updates resembles learning a value function in reinforcement learning, which is a function of current state or state-action pair. However, in a general SCG, the expected costs appear more complex.
\begin{theorem}
\textbf{(SCG's gradient estimators)} Given an SCG with a cost function $f$ defined on $\mathcal{Z} \subseteq \mathcal{X}$, and each random variable associated with its own distribution parameter such that $X\sim p(\cdot|{Pa}_{\scriptscriptstyle X}; \theta_{\scriptscriptstyle X})$, the gradient of the expected total cost $J$ with respect to $\theta_{\scriptscriptstyle X}$ can be written as:
\begin{equation} \label{eq-gradient}
\nabla_{\theta_X} J = \mathbb{E}_{{An}_X,X} \big[ \nabla_{\theta_{\scriptscriptstyle X}} \log{ p(X|{Pa}_{\scriptscriptstyle X};\theta_{\scriptscriptstyle X}) \cdot Q_{\scriptscriptstyle X}({Fr}_{{An}_X \cup \{X\}}) } \big]
\end{equation}
where ${Pa}_{\scriptscriptstyle X}$ is the set of $X$'s parents, ${An}_{\scriptscriptstyle X}$ the set of $X$'s ancestors and ${Fr}_{\scriptscriptstyle V}\subseteq V$ the frontier \footnote{In a multi-cost SCG, a cost $f$ must be specified for a frontier, denoted as ${Fr}^f_{\scriptscriptstyle V}$} of a set of random variables $V$, defined as: a subset of random variables from which the cost is reachable through random variables not in $V$. We also define a Q-function for each stochastic node, representing the expected cost depending on this random variable and its necessary ancestors such that:
\begin{equation}
Q_{\scriptscriptstyle X}({Fr}_{{An}_X \cup \{X\}}) := \mathbb{E}_{Z|{Fr}_{{An}_{X} \cup \{X\}}}[f(\mathcal{Z})]
\end{equation}
\end{theorem}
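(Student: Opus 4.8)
The plan is to start from the raw score-function (REINFORCE) estimator for the joint law induced by the DAG and then Rao-Blackwellize it by conditioning on $V := {An}_{\scriptscriptstyle X}\cup\{X\}$, so that the whole content of the theorem collapses onto a single conditional-independence fact: that conditioning on all of $V$ is equivalent to conditioning only on the frontier ${Fr}_{\scriptscriptstyle V}$. First I would write $p(\mathcal{X};\Theta)=\prod_{Y\in\mathcal{X}}p(Y|{Pa}_{\scriptscriptstyle Y};\theta_{\scriptscriptstyle Y})$ and $J=\int f(\mathcal{Z})\,p(\mathcal{X};\Theta)\,d\mathcal{X}$. Since $\theta_{\scriptscriptstyle X}$ enters only the single factor $p(X|{Pa}_{\scriptscriptstyle X};\theta_{\scriptscriptstyle X})$, differentiating under the integral sign (under the usual regularity: dominated convergence, and a support of $p(X|{Pa}_{\scriptscriptstyle X};\theta_{\scriptscriptstyle X})$ that does not depend on $\theta_{\scriptscriptstyle X}$) and applying the log-derivative identity $\nabla_{\theta_{\scriptscriptstyle X}}p=p\,\nabla_{\theta_{\scriptscriptstyle X}}\log p$ yields the plain estimator
\begin{equation}
\nabla_{\theta_{\scriptscriptstyle X}}J=\mathbb{E}_{\mathcal{X}}\big[\nabla_{\theta_{\scriptscriptstyle X}}\log p(X|{Pa}_{\scriptscriptstyle X};\theta_{\scriptscriptstyle X})\cdot f(\mathcal{Z})\big].
\end{equation}

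Second, I would observe that the score factor $\nabla_{\theta_{\scriptscriptstyle X}}\log p(X|{Pa}_{\scriptscriptstyle X};\theta_{\scriptscriptstyle X})$ is measurable with respect to $\{X\}\cup{Pa}_{\scriptscriptstyle X}\subseteq V$. Hence it can be pulled out of the inner expectation, and the tower property (conditioning on $V$) gives
\begin{equation}
\nabla_{\theta_{\scriptscriptstyle X}}J=\mathbb{E}_{V}\big[\nabla_{\theta_{\scriptscriptstyle X}}\log p(X|{Pa}_{\scriptscriptstyle X};\theta_{\scriptscriptstyle X})\cdot\mathbb{E}[f(\mathcal{Z})\mid V]\big].
\end{equation}
This is exactly the Rao-Blackwellization promised in the text, and it already matches the claimed formula except that the inner expectation conditions on all of $V={An}_{\scriptscriptstyle X}\cup\{X\}$ rather than on the frontier.

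Third — and this is where I expect the real work — I would prove $\mathbb{E}[f(\mathcal{Z})\mid V]=\mathbb{E}[f(\mathcal{Z})\mid {Fr}_{\scriptscriptstyle V}]=Q_{\scriptscriptstyle X}({Fr}_{\scriptscriptstyle V})$, i.e.\ that ${Fr}_{\scriptscriptstyle V}$ screens the rest of $V$ off from the cost. The structural key is that $V={An}_{\scriptscriptstyle X}\cup\{X\}$ is ancestrally closed (closed under taking parents), so $p(\mathcal{X})=p(V)\prod_{W\notin V}p(W|{Pa}_{\scriptscriptstyle W})$ and $\mathbb{E}[f(\mathcal{Z})\mid V]=\int f(\mathcal{Z})\prod_{W\notin V}p(W|{Pa}_{\scriptscriptstyle W})\,d(\mathcal{X}\setminus V)$. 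I would split the non-$V$ nodes into those from which a cost argument is reachable and those from which it is not; the latter set is closed under descendants, so its factors integrate to one in reverse topological order and drop out. In the surviving integrand, a node of $V$ can enter only as a cost argument lying in $V$ or as a parent of a surviving (cost-reachable, non-$V$) node; in either case the cost is reachable from that node through random variables outside $V$, so by definition it lies in ${Fr}_{\scriptscriptstyle V}$. Thus $\mathbb{E}[f(\mathcal{Z})\mid V]$ depends on $V$ only through ${Fr}_{\scriptscriptstyle V}$, and a second application of the tower property identifies it with $Q_{\scriptscriptstyle X}({Fr}_{\scriptscriptstyle V})$. Substituting into the previous display gives precisely Eq.~\eqref{eq-gradient}.

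The main obstacle is isolating the frontier cleanly rather than hand-waving "all influence flows through ${Fr}_{\scriptscriptstyle V}$." Concretely I must check two bookkeeping points: that a cost argument sitting inside $V$ is automatically in ${Fr}_{\scriptscriptstyle V}$ (the cost node is its direct child, so reachability through an empty set of intermediate random variables holds vacuously), and that the cost-unreachable downstream nodes really form a descendant-closed set whose members never appear as parents of nodes outside it, so that marginalizing them leaves no residual dependence on $V\setminus{Fr}_{\scriptscriptstyle V}$. Verifying these is what upgrades the intuition into the exact equality $\mathbb{E}[f(\mathcal{Z})\mid V]=Q_{\scriptscriptstyle X}({Fr}_{\scriptscriptstyle V})$ and closes the argument.
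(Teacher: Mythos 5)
Your proposal is correct, and at its core it is the same argument as the paper's: the gradient lands on the single factor $p(X|{Pa}_{\scriptscriptstyle X};\theta_{\scriptscriptstyle X})$ via the log-derivative trick, and the resulting estimator is Rao-Blackwellized using the fact that the frontier ${Fr}_{{An}_X\cup\{X\}}$ screens the remaining ancestors off from the cost. The only structural difference is the order of operations: the paper first splits $J$ into nested expectations $\mathbb{E}_{{An}_X}\big[\mathbb{E}_{X|{Pa}_X}\big[\mathbb{E}_{\mathcal{Z}|{An}_X\cup\{X\}}[f(\mathcal{Z})]\big]\big]$, invokes the Markov property to shrink the inner conditioning set to the frontier, and only then differentiates the middle factor; you differentiate the full joint first (raw REINFORCE) and then condition on $V={An}_X\cup\{X\}$ by the tower property. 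These two orderings are interchangeable and yield identical algebra. Where your proposal genuinely adds something is the third step: the paper asserts the screening-off property as "the Markov property" with no justification, whereas you actually prove it --- using that $V$ is ancestrally closed (so $p(\mathcal{X})=p(V)\prod_{W\notin V}p(W|{Pa}_{\scriptscriptstyle W})$), that the cost-unreachable non-$V$ nodes form a descendant-closed set whose factors marginalize to one in reverse topological order, and that every surviving dependence on $V$ (a cost argument inside $V$, or a $V$-parent of a cost-reachable non-$V$ node) lies in ${Fr}_{\scriptscriptstyle V}$ by the definition of reachability through non-$V$ variables. That bookkeeping is correct and is precisely what the paper leaves implicit; if anything, your version is the more complete proof of the two.
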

The Q-function $Q_{\scriptscriptstyle X}$ has an enlarged scope when a bypass goes around $X$ to the cost. The scope incorporates the ancestor of $X$ from which the bypass starts, carrying extra information needed at $X$ when evaluating $Q_{\scriptscriptstyle X}$. The scope thus makes a frontier set for $X$ and all its ancestors, indicating the Markov property that given this scope the remaining ancestors will not affect the cost. Therefore, $Q_{\scriptscriptstyle X}$ acts as a \emph{local surrogate cost} to $X$ of the remote cost, much like seeing what the future looks like from the perspective of its own scope and trying to minimize $\mathbb{E}_{{An}_X,X} [Q_{\scriptscriptstyle X}({Fr}_{{An}_X \cup \{X\}})]$.

\begin{figure*}
\centering
\includegraphics[width=\textwidth]{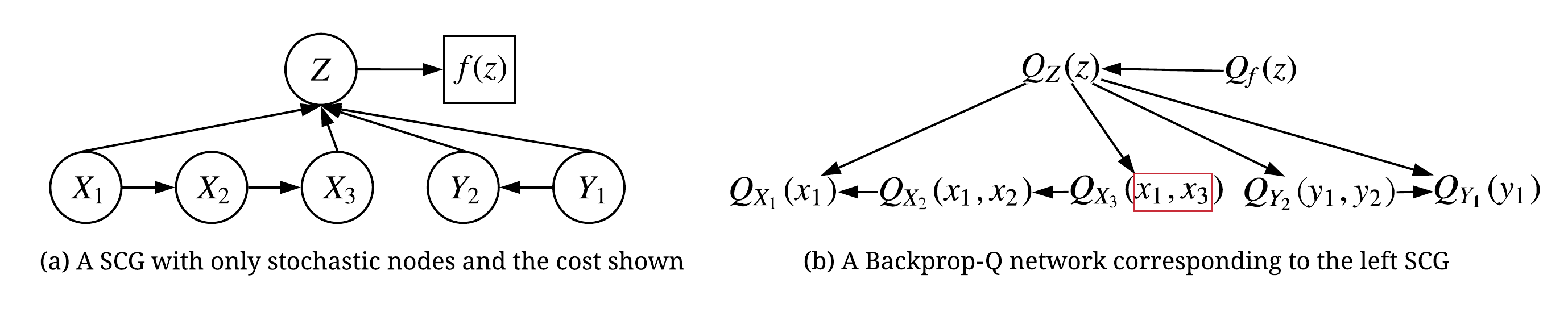}
\vspace{-20pt}
\caption{An instance of one-cost SCGs and its Backprop-Q network}
\vspace{-10pt}
\label{fig-BPQ-net-1c}
\end{figure*}

\textbf{Backprop-Q network.} To propagate cost, we need to derive the rules of expectation updates. Let $X,Y$ be two stochastic nodes such that $X\in {Pa}_{\scriptscriptstyle Y}$ and then we have: $Q_{\scriptscriptstyle X}({Sc}_{\scriptscriptstyle X}) = \mathbb{E}_{V|{Sc}_{X}}[Q_{\scriptscriptstyle Y}({Sc}_{\scriptscriptstyle Y})]$, where scope ${Sc}_{\scriptscriptstyle X} = {Fr}_{{An}_X \cup \{X\}}$, ${Sc}_{\scriptscriptstyle Y} = {Fr}_{{An}_Y \cup \{Y\}}$ and $V = {Sc}_{\scriptscriptstyle Y} - {An}_{\scriptscriptstyle X} \cup \{X\}$ represents what variables are still unknown in $Sc_{\scriptscriptstyle Y}$ at node $X$. Figure \ref{fig-BPQ-net-1c} shows that a Q-function may have more than one equivalent update rules, e.g., $Q_{\scriptscriptstyle X_1}$ and $Q_{\scriptscriptstyle Y_1}$, when a node has multiple paths reaching the cost. The update rules between Q-functions can be represented by the reversed $\mathcal{G}_\mathcal{X}$ of an SCG, plus the cost as a root. We call it a \emph{Backprop-Q network}. Each node in a Backprop-Q network is a Q-function\footnote{We consider a cost $f$ a special Q-function, deonted as $Q_f(\cdot) := f(\cdot)$ with the same scope as $f$.}, e.g., $Q_{\scriptscriptstyle X}({Sc}_{\scriptscriptstyle X})$, indexed by a stochastic node $X$ in $\mathcal{G}_\mathcal{X}$, a scope denoted as ${Sc}_{\scriptscriptstyle X}$ and a cost source\footnote{In the multi-cost setting, we need to label a cost source for Q-functions, e.g., $Q_{\scriptscriptstyle X}^f({Sc}_{\scriptscriptstyle X})$}. We represent a Backprop-Q network as $(\mathcal{Q},\mathcal{G}_\mathcal{Q},\mathcal{R})$, where $\mathcal{Q}$ is the set of Q-functions, $\mathcal{G}_\mathcal{Q}$ the directed acyclic graph on $\mathcal{Q}$ and $\mathcal{R} = \{ {\mathcal{R}}_{\scriptscriptstyle X} \mid {\mathcal{R}}_{\scriptscriptstyle X} Q_{\scriptscriptstyle X}({Sc}_{\scriptscriptstyle X}) := \mathbb{E}[Q_{\scriptscriptstyle Y}({Sc}_{\scriptscriptstyle Y})], X \in {Pa}_{\scriptscriptstyle Y}, \forall X \in \mathcal{X} \}$ the set of update-rule operators. If $Q_{\scriptscriptstyle X}$ has multiple equivalent update rules, we pick any one or take the average. In multi-cost SCGs, we will meet multiple $Q_{\scriptscriptstyle X}$ with different scopes and cost sources at the same node $X$, making $\mathcal{G}_\mathcal{Q}$ no more a reversed $\mathcal{G}_\mathcal{X}$.

\textbf{Learning local surrogate cost.} If a local surrogate cost is exactly a true expected cost, we can obtain an unbiased gradient estimator by Eq.\ref{eq-gradient}. However, computing a sweep of expectation updates is usually intractable. We thus turn to sample updates. For each Q-function, we sample one step forward, use this sample to query the next Q-function and then update it as: $Q_{\scriptscriptstyle X}({Sc}_{\scriptscriptstyle X}) \leftarrow  Q_{\scriptscriptstyle X}({Sc}_{\scriptscriptstyle X}) + \alpha [Q_{\scriptscriptstyle Y}(y,{Sc}_{\scriptscriptstyle Y}^{-y}) - Q_{\scriptscriptstyle X}({Sc}_{\scriptscriptstyle X})]$, where $y \sim p_{\scriptscriptstyle Y}(\cdot|{Pa}_{\scriptscriptstyle Y};\theta_{\scriptscriptstyle Y})$ is the drawn sample, assuming $X \in {Pa}_{\scriptscriptstyle Y}$ and other parents known, and $\alpha$ is a step size. We can also run an ancestral sampling pass and generate a full set of samples to then update each Q-function backward. It is a graph version of on-policy TD-style learning. The downside is sampling error and accumulated incorrectness of downstream Q-functions due to lack of exact expectation computation. Is there a convergence guarantee? Would these Q-functions converge to the true expected costs? In a tabular setting, the answer is yes as in reinforcement learning \cite{Sutton2017_RL}. When Q-functions are estimated by function approximators, denoted as $Q_w$, especially in a nonlinear form like neural networks, we have the convergence guarantee as well, so long as each Q-function is independently parameterized and trained sufficiently, as opposed to what we know in reinforcement learning. When learning $Q_{w_{X}}$ from $Q_{w_Y}$, for example, applying sample updates is actually doing one-step stochastic gradient descent to reduce the expected squared errors by optimizing $w_{\scriptscriptstyle X}$:
\begin{equation}
\begin{split}
{Err}(w_{\scriptscriptstyle X}) & := \mathbb{E}_{An_Y,Y}[(Q_{w_{X}}({Sc}_{\scriptscriptstyle X}) - Q_{w_{Y}}({Sc}_{\scriptscriptstyle Y}))^2] \\
& \ge \mathbb{E}_{An_X, X}[(Q_{w_{X}}({Sc}_{\scriptscriptstyle X}) - \mathbb{E}_{{Sc}_Y - {An}_X \cup \{X\} | {Sc}_X} [Q_{w_{Y}}({Sc}_{\scriptscriptstyle Y})])^2]
\end{split}
\end{equation}
The one-step update on $w_{\scriptscriptstyle X}$ is: $w_{\scriptscriptstyle X} \leftarrow w_{\scriptscriptstyle X} + \alpha (Q_{w_{Y}}({Sc}_{\scriptscriptstyle Y}) - Q_{w_{X}}({Sc}_{\scriptscriptstyle X})) \nabla Q_{w_{X}}({Sc_{\scriptscriptstyle X}})$.
\begin{theorem}
\textbf{(Convergence of learned Q-functions)} Given a Backprop-Q network with one cost as the root, if the expected squared error between each learned $Q_{w_X}$ and its parent $Q_{w_Y}$ can be bounded by $\epsilon$ ($\epsilon > 0$) such that $\mathbb{E}_{An_Y,Y}[(Q_{w_{X}}({Sc}_{\scriptscriptstyle X}) - Q_{w_{Y}}({Sc}_{\scriptscriptstyle Y}))^2] \le \epsilon$, then we have:
\begin{equation}
\mathbb{E}_{An_X,X}\big[\big(Q_{w_{X}}({Sc}_{\scriptscriptstyle X}) - Q_{\scriptscriptstyle X}({Sc}_{\scriptscriptstyle X})\big)^2\big] \le (3\cdot 2^{l_{Q_X}-1} - 2)\epsilon \quad \text{ for } l_{Q_X} \ge 1
\end{equation}
where $Q_{\scriptscriptstyle X}({Sc}_{\scriptscriptstyle X})$ represents the true expected cost and $l_{Q_X}$ the length of the path from $Q_{\scriptscriptstyle X}$ to the root.
\end{theorem}
The above shows the deviations from true Q-functions accumulate as $l_{Q_{X}}$ increases. As a Backprop-Q network has a finite size, the deviations can go infinitely small when each $Q_{w_X}$ is sufficiently trained to fit $Q_{w_Y}$. Due to independent parameterization, optimizing $w_{\scriptscriptstyle X}$ will not affect $Q_{w_Y}$'s convergence.

\textbf{SCGs with a multivariate cost.} For a cost defined on multiple random variables, e.g., $f(X_1,X_2,X_3)$, we can assume a virtual node prepended to the cost, which collects all the random variables in $f$'s scope into one big random variable $\mathcal{Z} = (X_1,X_2,X_3)$, following a deterministic conditional distribution $\mathcal{Z} \sim p_{\scriptscriptstyle \mathcal{Z}}(\cdot|X_1,X_2,X_3)$. The rest procedure is the same as the above.

\textbf{SCGs with shared parameters.} Consider a case with parameter $\theta$ shared by all distributions and even the cost. We replace $\theta$ with local parameters, e.g., $\theta_{\scriptscriptstyle X}$, each only corresponding to one random variable but constrained by identity mapping $\theta_{\scriptscriptstyle X} = \theta$. To compute $\nabla_{\theta} J$, we compute the gradients w.r.t. each local parameter and then take the sum of them as $\nabla_{\theta} J = \sum_{\scriptscriptstyle X} \nabla_{\theta_X} J$.

\textbf{Remarks.} Standard backpropagation transports gradients, the first-order signals, while we propagate the zero-order signals of function outputs through stochastic nodes, with cumulative effect by past updates. When the approximate Q-functions get close to the true ones, we can expect their first-order derivatives also get close to the true gradients in some sense, which means we can utilize the gradients of the approximate Q-functions as well. The theoretic analysis can be found in Appendix.

\subsection{Multi-Cost SCGs}

\begin{figure*}
\centering
\includegraphics[width=\textwidth]{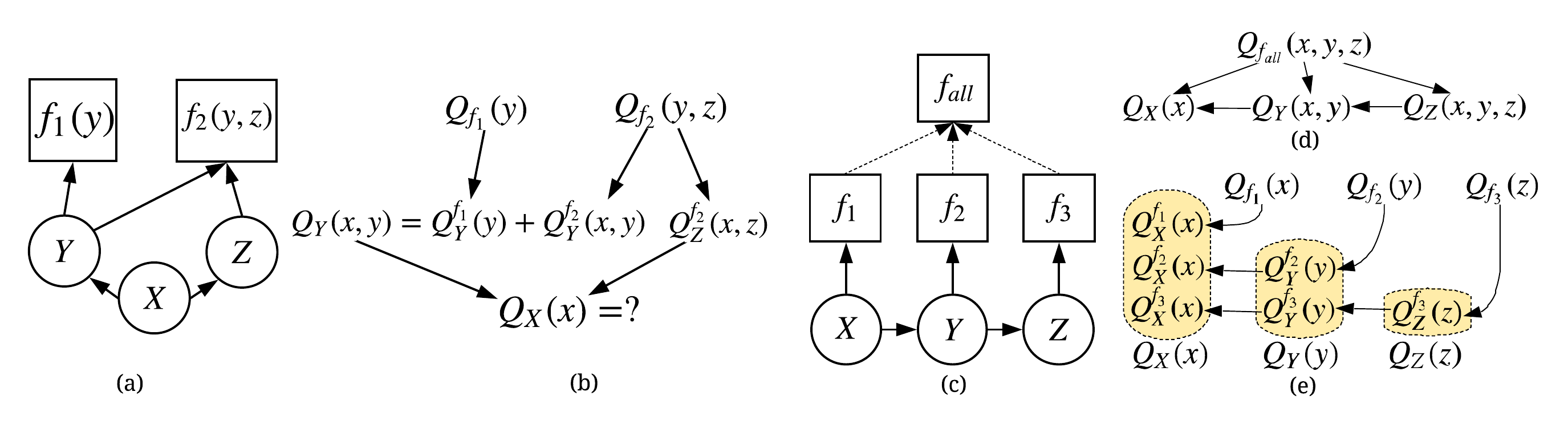}
\vspace{-20pt}
\caption{Multi-cost SCGs and their Backprop-Q networks}
\vspace{-10pt}
\label{fig-mc-a}
\end{figure*}

\textbf{Trouble caused by multiple costs.} A stochastic node leading to multiple costs, e.g., $Y$ in Figure \ref{fig-mc-a}(a), may have Q-functions of different scopes and different cost sources as shown in Figure \ref{fig-mc-a}(b). The expected cost $Q_{\scriptscriptstyle Y}(x,y)$ at node $Y$ is the sum of those from two costs respectively. However, it is confusing to update $Q_{\scriptscriptstyle X}(x)$ based on $Q_{\scriptscriptstyle Y}(x,y)$ and $Q_{\scriptscriptstyle Z}^{f_2}(x,z)$, because summing will double $f_2$ and averaging will halve $f_1$. We can treat two costs separately to build a Backprop-Q network for each, so that we can track cost sources and take the update target for $Q_{\scriptscriptstyle X}$ as: $Q_{\scriptscriptstyle Y}^{f_1}(y) + (Q_{\scriptscriptstyle Y}^{f_2}(x,y) + Q_{\scriptscriptstyle Z}^{f_2}(x,z))/2$. However, it is expensive to build and store a separate Backprop-Q network for each cost, and maintain probably multiple Q-functions at one stochastic node. An alternative way is to wrap all costs into one and treat it as a one-cost case as shown in Figure \ref{fig-mc-a}(c), but the scopes of Q-functions can be lengthy as in Figure \ref{fig-mc-a}(d).

\textbf{Multi-cost Backprop-Q networks.} In many cases, per-cost Backprop-Q networks can be merged and reduced. For example, in Figure \ref{fig-mc-a}(e), we sum Q-functions at each stochastic node into one, i.e., $Q_{\scriptscriptstyle X}(x) := Q_{\scriptscriptstyle X}^{f_1}(x) + Q_{\scriptscriptstyle X}^{f_2}(x) + Q_{\scriptscriptstyle X}^{f_3}(x)$, thus requiring only one Q-function at node $X$. The process resembles the one-step TD method in reinforcement learning, except that Q-functions are parameterized independently.
\begin{theorem}
\textbf{(Merging Backprop-Q networks)} Two Backprop-Q networks can be merged at stochastic node $X$ and its ancestors, if the two are fully matched from $X$ through $X$'s ancestors, that is, the set of the incoming edges to each ancestor in a Backprop-Q network is exactly matched to the other.
\end{theorem}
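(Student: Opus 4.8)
\emph{Proof plan.} First I would make the notion of ``merging'' precise. For each cost source $f_i$ we are given a per-cost Backprop-Q network with Q-functions $Q_A^{f_i}$ and update operators $\mathcal{R}_A^{f_i}$. To merge the two networks at $X$ and its ancestors means: for every node $A \in \{X\} \cup An_X$, replace the pair $Q_A^{f_1}, Q_A^{f_2}$ by the single function $Q_A := Q_A^{f_1} + Q_A^{f_2}$, and exhibit a single update operator $\mathcal{R}_A$ consistent with this sum. The theorem then reduces to one assertion: under the matched-edge hypothesis, the summed functions $\{Q_A\}$ together with well-defined operators $\{\mathcal{R}_A\}$ again satisfy the Backprop-Q update identities of Section 3.1. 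The counterexample in Figure \ref{fig-mc-a}(b) — where $X$ reaches $f_1$ only through $Y$ but reaches $f_2$ through both $Y$ and $Z$ — is exactly the situation the hypothesis forbids, and it pins down why equality of the two incoming-edge sets is the correct condition: it is what prevents the double-counting / halving artifact described in the text.

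I would carry the argument out through two lemmas, by induction on the nodes of the merge region ordered by increasing distance from the cost root (so each child $B$ is processed before its parent $A$). The first lemma is \emph{scope consistency}: for every $A \in \{X\} \cup An_X$ one has $Sc_A^{f_1} = Sc_A^{f_2}$. Here I would use that the scope is the frontier $Sc_A = Fr_{An_A \cup \{A\}}$, and that membership of a node $C \in An_A \cup \{A\}$ in this frontier is governed by whether $Q_C$ has an incoming edge from some $Q_D$ with $D \notin An_A \cup \{A\}$ in the respective Backprop-Q network — such an edge being present precisely when the cost is reachable from $C$ through a child outside the set. Since the hypothesis asserts that the incoming-edge set of each such $Q_C$ is identical across the two networks, the frontier membership, and hence the scope, coincides. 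The second lemma is \emph{operator consistency}: once the scopes agree, the expectation variable set $V = Sc_B - (An_A \cup \{A\})$ and the conditional $p(\cdot \mid Pa)$ appearing in the update $Q_A(Sc_A) = \mathbb{E}_{V \mid Sc_A}[Q_B(Sc_B)]$ are purely structural, hence identical in both networks; and the matched edges guarantee that $A$ draws its update from the same child (or the same family of equivalent children) $B$ in both.

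Given these two lemmas, linearity of expectation does the rest. Summing the two per-cost update identities over $i$ yields
\begin{equation}
Q_A^{f_1}(Sc_A) + Q_A^{f_2}(Sc_A) = \mathbb{E}_{V \mid Sc_A}\big[\, Q_B^{f_1}(Sc_B) + Q_B^{f_2}(Sc_B) \,\big],
\end{equation}
i.e. $Q_A(Sc_A) = \mathbb{E}_{V \mid Sc_A}[Q_B(Sc_B)]$, which is a single valid Backprop-Q update for the merged function with operator $\mathcal{R}_A$. Running the induction from the root outward through all of $\{X\} \cup An_X$ shows that the summed network is itself a legitimate Backprop-Q network, and that its Q-functions represent the true combined expected cost $Q_A = \mathbb{E}[\, f_1 + f_2 \mid Sc_A \,]$ by the correctness of the update rules established for the one-cost case.

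The step I expect to be the main obstacle is scope consistency. The frontier is defined through reachability to a \emph{specific} cost, so a priori the scopes could differ between $f_1$ and $f_2$; the real content is to show that the matched-incoming-edge hypothesis pins down exactly the reachability pattern that defines the frontier. The delicate bookkeeping is in the bypass cases — paths that leave $An_A \cup \{A\}$ and may later re-enter it before reaching the cost — where I must verify that ``$C$ has an exit to a node from which the cost is reachable while avoiding the set'' is faithfully encoded by the presence of the corresponding incoming edge to $Q_C$, and in handling nodes with several equivalent update rules so that picking or averaging them stays compatible with the summation. Once scope consistency is nailed down, operator consistency and the linearity step are routine.
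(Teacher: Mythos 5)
Your plan is correct, and its core is the same as the paper's own argument: matched incoming edges force the two per-cost update rules at each node of the merge region to follow the same pattern (sum vs.\ average over the same edge set), so the pairwise sums $Q_A^{f_1}+Q_A^{f_2}$ can be propagated as single functions. What you do differently is carry this out at a genuinely finer resolution. The paper's proof is an informal appeal to the ``same pattern'' observation — it never mentions scopes, does no induction, and never invokes linearity of expectation — whereas your decomposition into a scope-consistency lemma plus operator consistency, glued by linearity and an induction ordered from the cost root outward, supplies content the paper leaves implicit; in particular, the paper silently assumes the two summands are functions of compatible arguments when it writes the merged $Q_Y^{f_1,f_2}$, and your Lemma 1 is exactly what justifies that. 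Two remarks settle the obstacles you flag. First, the re-entrant bypass worry dissolves because the graph is a DAG: if a path exits $An_A\cup\{A\}$ at a child $D\notin An_A\cup\{A\}$, no descendant of $D$ can lie in $An_A\cup\{A\}$ (that would make $D$ an ancestor of $A$), so the path never re-enters; hence membership of $C$ in the frontier is precisely ``some incoming edge to $Q_C$ originates at a $Q_D$ with $D\notin An_A\cup\{A\}$, or at the cost root,'' which is read directly off the matched edge sets, and your scope-consistency lemma goes through for every $A$ in the merge region (for the ancestors because $An_A\cup\{A\}\subseteq An_X$, and for $X$ itself because its own frontier membership needs only reachability, not any particular edge). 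Second, note that the hypothesis requires matching only at $X$'s ancestors, not at $X$ — this is what the paper's closing sentence (``we can start the merging a little bit earlier'') means: $Q_X$'s incoming edges come from $X$'s SCG-children, which lie outside the merge region, so at $X$ the merged update is simply the two per-source updates summed, with no matching needed; matching is needed exactly at the ancestors, where your average over same-source edges must commute with the sum across cost sources, and identical edge sets are what make that interchange valid.
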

In Figure \ref{fig-mc-b} two costs provide separate Backprop-Q networks as in Figure \ref{fig-mc-b}(b). We can merge them at the last two nodes according to the above theorem. The update rules are always averaging all or picking one over incoming edges with the same cost source, and then summing those from different cost sources. Furthermore, we can reduce each Backprop-Q network into a directed rooted spanning tree, ensuring that each node receives exactly one copy of the cost. Many ways exist to construct a tree. Figure \ref{fig-mc-b}(c) shows a version with shorter paths but no benefit for merging, while Figure \ref{fig-mc-b}(d) constructs a chain version so that we can get a much more simplified Backprop-Q network.  

\textbf{Some complex cases.} The above merging guideline can apply to more complex SCGs, and result in a surprisingly reduced Backprop-Q network. In Appendix, we consider a stack of fully-connected stochastic layers, with costs defined on each stochastic node.

\begin{figure*}
\centering
\includegraphics[width=\textwidth]{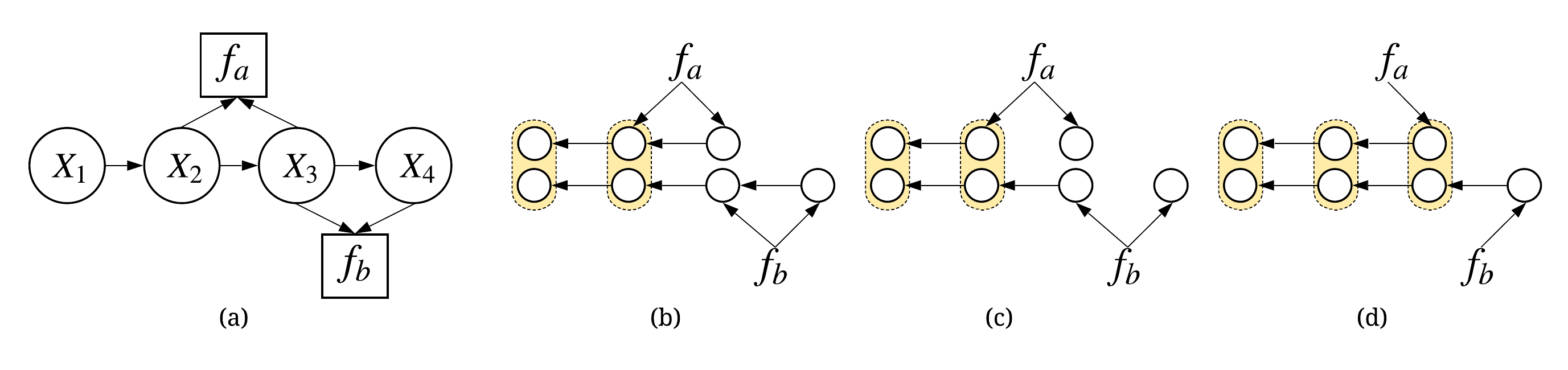}
\vspace{-20pt}
\caption{Merging two Backprop-Q networks}
\vspace{-10pt}
\label{fig-mc-b}
\end{figure*}
\section{Enhanced Backprop-Q}

\subsection{Using Techniques from Reinforcement Learning}

\textbf{$\lambda$-return updates.} $\lambda$-return provides a way of moving smoothly between Monte Carlo and one-step TD methods \cite{Sutton2017_RL}. It offers a return-based update, averaging all the $n$-step updates, each weighted proportional to $\lambda^{n-1}$. If $\lambda=1$, it gives a Monte Carlo return;  if $\lambda=0$, it reduces to the one-step return. Therefore, $\lambda$ trades off estimation bias with sample variance. We borrow the idea from \cite{Sutton2017_RL,Schulman2016_GAE} to derive a graph-based $\lambda$-return method. For each node, we collect upstream errors, multiplied by $\lambda$ and the discount factor $\gamma$, and add it to the current TD error. The combined error then propagates downstream. It follows the same pattern (averaging or summing) as the update rules defined by Backprop-Q networks. Cost propagation turns into propagation of TD errors. The limitation is that the updating must run in a backward pass synchronously. Some cases can be found in Appendix.

\textbf{Experience replay.} This technique is used to avoid divergence when training large neural networks \cite{Mnih2015_HumanLevel,Lillicrap2016_DDPG,Gu2017_QProp}. It keeps the recent $N$ experiences in a replay buffer, and applies TD updates to experience samples drawn uniformly at random. It breaks up a forward pass of ancestral sampling and may lose a full return. However, by reusing off-policy data, it breaks the correlation between consecutive updates and increases sample efficiency. It also allows asynchronous updating, which means cost propagation over a Backprop-Q network can be implemented at each node asynchronously. In a MDP, an experience tuple is $(s_t,a_t,s_{t+1})$ and then a sample $a_{t+1}$ is drawn by a target policy. In the setting of SCGs, we develop a graph-based experience replay that an experience tuple for node $Q_{\scriptscriptstyle X}$ is represented as $(X, \mathcal{A}, \mathcal{B}^1, \mathcal{A}^1 \ldots, \mathcal{B}^K, \mathcal{A}^K)$, where $\mathcal{A}$ is $X$'s ancestors in $Q_{\scriptscriptstyle X}$'s scope, $\mathcal{B}^k$ represents other potential parents affecting a common child $Y_k$ with $X$, and $\mathcal{A}^k$ is $Y_k$'s ancestors in $Q_{{\scriptscriptstyle Y}_k}$'s scope. Here, we assume that $X$ has $K$ children, which means that $Q_{\scriptscriptstyle X}$ probably has $K$ upstream Q-functions to combine. The updates are based on the optimization given below:
\begin{displaymath}
\min_{w_{X}} \mathbb{E}_{(X,\mathcal{A},\mathcal{B}^1,\mathcal{A}^1,\ldots,\mathcal{B}^K,\mathcal{A}^K)\sim \text{Uniform}(RB)} \mathbb{E}_{^{Y_k\sim p(\cdot|X,\mathcal{B}^k;\theta_{Y_k})}_{k=1,\ldots,K}} \Big[ \Big( \sum_{k=1}^K Q_{w_{Y_k}}(Y_k,\mathcal{A}^k) - Q_{w_X}(X,\mathcal{A}) \Big)^2 \Big]
\end{displaymath}
where ${RB}$ means a replay buffer. A case can be found in Appendix as an illustration.

\textbf{Other techniques.} (1) To improve stability and avoid divergence, we borrow the ideas from \cite{Mnih2015_HumanLevel,Lillicrap2016_DDPG} to develop a \emph{slow-tracking target} network. (2) We study graph-based \emph{advantage functions} and use them to replace Q-functions in the gradient estimator to reduce variance. (3) We apply \emph{controlled policy optimization} to distribution parameters in SCGs, using the ideas from \cite{Schulman2015_TRPO,Schulman2017_PPO}. See Appendix. 

\subsection{Using Techniques from Variational Bayesian Methods} In the framework of generalized backpropagation, after learning local surrogate costs for stochastic nodes, we need to train distribution parameters of the SCG, that is, we should continue the backpropagation process to transport gradients of local costs through underlying differentiable subgraphs. However, there is still one obstacle we must overcome. The objective function, $\mathbb{E}_{{\scriptscriptstyle Z} \sim p(\cdot;\theta)}[f(Z)]$ where $f(z):=Q_{w_Z}(z)$, is an expectation in terms of a distribution we need to optimize. 

Stochastic optimization has been widely used to solve the optimization problem. The key is to obtain a low-variance and unbiased gradient estimator applicable to both continuous and discrete random variables. The simplest and most general method is the REINFORCE estimator \cite{Williams1992}, but it is usually impractical due to high variance. Recently, to solve the backpropagation through stochastic operations in variational inference and learning, several advanced methods have been proposed, including the reparameterization trick \cite{Kingma2014_VAE,Rezende2014_DLGM,Ruiz2016_GREP}, control variates \cite{Mnih2014_NVIL,Paisley2012_VBI,Ranganath2013_BBVI,Gu2016_MuProp}, continuous relaxation \cite{Maddison2017_Concrete,Jang2017_GumbelSoftmax} and some hybrid methods like Rebar \cite{Tucker2017_Rebar} and RELAX \cite{Grathwohl2018_RELAX} to further reduce variance and keep the gradient estimator unbiased. In Appendix, we illustrate these methods in SCGs. We find that the crux of the matter is to open up a differentiable path from parameters to costs or surrogate objectives. It is better to utilize gradient information, even approximate, rather than a function output. All the mentioned techniques can be applied to our learned Q-functions.

\section{The Big Picture of Backpropagation}

Looking over the panorama of learning in an SCG, we see that the Backprop-Q framework extends backpropagation to a more general level, propagating learning signals not only across deterministic nodes but also stochastic nodes. The stochastic nodes act like repeaters, sending expected costs back through all random variables. Then, each local parameterized distribution, which is a computation subgraph consisting of many deterministic and differentiable operations, takes over the job of backpropgation and then the standard backpropgation starts. Note that these computation subgraphs can overlap by sharing parameters with each other. See an illustration in Appendix. 
\section{Experimental Suggestions and Concerns}

SCGs can express a wide range of models in stochastic neural networks, VB and RL, which differ significantly. We provide experimental suggestions and concerns from three aspects listed below: 

\textbf{(1)} \emph{Choose a model to train by Backprop-Q with awareness of properties of the cost, graph structure, and types of random variables.} i) Is the cost differentiable? Does it involve SCG's distribution functions or parameters? Can it be decoupled and split into smaller costs? For example, think of the ELBO optimized in variational inference, and compare it with the discrete metric BLEU used in machine translation. ii) Does the graph contain long statistical dependencies? Does it hold only long-delayed costs, or have immediate costs? If the graph structure is flat and the delayed effect is weak, it might be better to use the MC-based actual cost value rather than that bootstrapped from learned Q-functions. iii) Is a random variable continuous or discrete? We suggest using the reparameterization trick for continuous variables if the probability is computable after transformation.

\textbf{(2)} \emph{Consider the way to learn Q-functions and how the trained SCG model might be impacted by the bias and inaccuracy of learned Q-functions.} i) Linear approximators converge fast and behave consistently, but cannot fit highly nonlinear functions, resulting in large bias. Nonlinear approximators based on neural networks can be unstable and hard to train, probably with higher sample complexity than using actual returns. ii) The policy gradient theorem \cite{Sutton2000_PolicyGradient} suggests using compatible features shared by policy and critic. We speculate that this might be related to the underlying factor of how Q-functions impact the SCG model, that is, how good of teaching signals Q-functions can offer might be more important than how well they fit the exact expected costs. iii) The sample updates to fit Q-functions may be correlated similarly to RL. We consider using experience replay and separate target networks to smooth data distribution for training Q-functions.

\textbf{(3)} \emph{Consider the way to utilize Q-functions.} i) A simple implementation is to treat a Q-function as a local cost, yielding a low-variance gradient estimator by applying one of the methods proposed in VB. However, the estimator is always biased, relying on how well the Q-function approximates to the exact expected cost. ii) We can treat a Q-function as a control variate to reduce the variance caused by actual returns, and correct the bias by a differentiable term based on this Q-function. See Appendix.
\section{Related Work}

\vspace{-1 pt}

Schulman \cite{Schulman2015_SCG} introduced a framework of automatic differentiation through standard backpropagation in the context of SCGs. Inspired by this work, we conduct a comprehensive survey from three areas. 

\textbf{Backpropagation-related learning:} The backpropagation algorithm, proposed in \cite{Rumelhart1986_LearningRepByBackpropErr}, can be viewed as a way to address the credit assignment problem, where the "credit" is represented by a signal of back-propagated gradient. Instead of gradients, people studied other forms of learning signals and other ways to assign them. \cite{Bengio2015_TowardsBioPlausibleDL,Lee2015_DifferenceTargetProp} compute targets rather than gradients using a local denoising auto-encoder at each layer and building both feedforward and feedback pathways. \cite{Lillicrap2016_RandomSynapticFeedbackWeights,Nokland2016_DirectFeedbackAlignment} show that even random feedback weights can deliver useful learning signals to preceding layers, offering a biologically plausible learning mechanism. \cite{Jaderberg2016_DecoupledNeuralInterfaces} uses synthetic gradients as error signals to work with backpropagation and update independently and asynchronously.

\textbf{Policy gradient and critic learning in RL:} Policy gradient methods offer stability but suffer from high variance and slow learning, while TD-style critic (or value function) learning are sample-efficient but biased and sometimes nonconvergent. Much work has been put to address such issues. For policy gradient, people introduced a variety of approaches, including controlled policy optimization (TRPO,PPO) \cite{Schulman2015_TRPO,Schulman2017_PPO}, deterministic policy gradient (DPG,DDPG) \cite{Silver2014_DPG,Lillicrap2016_DDPG}, generalized advantage estimation (GAE) \cite{Schulman2016_GAE} and control variates (Q-Prop) \cite{Gu2017_QProp}. TRPO and PPO constrain the change in the policy to avoid an excessively large policy update. DPG and DDPG, based on off-policy actor-critic, enable the policy to utilize gradient information from action-value function for continuous cases. GAE generalizes the advantage policy gradient estimator, analogous to TD($\lambda$). For critic learning, people focused mainly on the use of value function approximators \cite{Sutton2000_PolicyGradient} instead of actual returns \cite{Williams1992}. The TD learning \cite{Sutton2017_RL}, aided by past experience, is used widely. When using large neural networks to train action-value functions, DQN \cite{Mnih2015_HumanLevel} uses experience replay and a separate target network to break correlation of updates. A3C \cite{Mnih2016_A3C} proposes an asynchronous variant of actor-critic using a shared and slow-changing target network without experience replay. DDPG and Q-Prop inherit these two techniques to exploit off-policy samples fully and gain sample efficiency and model consistency.

\textbf{Gradient estimators in VB:} The problem of optimizing over distribution parameters has been studied a lot in VB. The goal is to obtain an unbiased and lower-variance gradient estimator. The basic estimator is the REINFORCE \cite{Williams1992}, also known as the score-function \cite{Fu2006} or likelihood-ratio estimator \cite{Glynn1990}. With the widest applicability, it is unbiased but suffer from high variance. The main approaches for variance reduction are the reparameterization trick \cite{Kingma2014_VAE,Rezende2014_DLGM,Ruiz2016_GREP} and control variates \cite{Mnih2014_NVIL,Paisley2012_VBI,Ranganath2013_BBVI}. The former is applicable to continuous variables with a differentiable cost. It is typically used with Gaussian distribution \cite{Kingma2014_VAE,Rezende2014_DLGM}. \cite{Ruiz2016_GREP} proposed a generalized reparameterization gradient for a wider class of distributions. To reparameterize discrete variables, \cite{Maddison2017_Concrete,Jang2017_GumbelSoftmax} introduced the Concrete and the Gumbel-Softmax distribution respectively to build relaxed models but bringing in bias. The latter is suitable for both continuous and discrete variables. A control variate can be an input-dependent term, known as baseline \cite{Mnih2014_NVIL}, or a sample-dependent term with an analytic expectation \cite{Paisley2012_VBI,Ranganath2013_BBVI}. It may obtain higher variance than the former in practice, intuitively because it cannot utilize gradient information of the cost but an outcome. Other variance reduction methods include local expectation gradients \cite{Titsias2015_LocalExpectationGradients} and straight-through estimator \cite{Bengio2013_EstimatingOrPropagating}. Recently, new advanced estimators have been proposed with lower variance and being unbiased. MuProp \cite{Gu2016_MuProp} uses the first-order Taylor expansion as a control variate, leaving the deterministic term computed by a mean-field network. Its model-free version for RL, Q-Prop \cite{Gu2017_QProp}, uses the similar technique combined with off-policy critic-learning by experience replay. Rebar \cite{Tucker2017_Rebar} and RELAX \cite{Grathwohl2018_RELAX} aim at deriving estimators for discrete variables. Unlike Rebar, RELAX learns a free-form control variate parameterized by a neural network.

\section{Conclusion}

\vspace{-1 pt}

In this paper, we propose a framework of generalized backpropagation for arbitrary stochastic computation graphs, enabling propagated signals to go across stochasticity and beyond gradients. 


\bibliography{reference}{}
\bibliographystyle{unsrt}

\newpage

\setcounter{section}{0}
\begin{centering}
{\Large{\textbf{Appendix}}}
\end{centering}

\section{Proofs}

\begin{theorem}
\textbf{(SCG's gradient estimators)} Given an SCG with a cost function $f$ defined on $\mathcal{Z} \subseteq \mathcal{X}$, and each random variable associated with its own distribution parameter such that $X\sim p(\cdot|{Pa}_{\scriptscriptstyle X}; \theta_{\scriptscriptstyle X})$, the gradient of the expected total cost $J$ with respect to $\theta_{\scriptscriptstyle X}$ can be written as:
\begin{equation}
\nabla_{\theta_X} J = \mathbb{E}_{{An}_X,X} \big[ \nabla_{\theta_{\scriptscriptstyle X}} \log{ p(X|{Pa}_{\scriptscriptstyle X};\theta_{\scriptscriptstyle X}) \cdot Q_{\scriptscriptstyle X}({Fr}_{{An}_X \cup \{X\}}) } \big]
\end{equation}
where ${Pa}_{\scriptscriptstyle X}$ is the set of $X$'s parents, ${An}_{\scriptscriptstyle X}$ the set of $X$'s ancestors and ${Fr}_{\scriptscriptstyle V}\subseteq V$ the frontier \footnote{In a multi-cost SCG, a cost $f$ must be specified for a frontier, denoted as ${Fr}^f_{\scriptscriptstyle V}$} of a set of random variables $V$, defined as: a subset of random variables from which the cost is reachable through random variables not in $V$. We also define a Q-function for each stochastic node, representing the expected cost depending on this random variable and its necessary ancestors such that:
\begin{equation}
Q_{\scriptscriptstyle X}({Fr}_{{An}_X \cup \{X\}}) := \mathbb{E}_{Z|{Fr}_{{An}_{X} \cup \{X\}}}[f(\mathcal{Z})]
\end{equation}
\end{theorem}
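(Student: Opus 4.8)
The plan is to derive the estimator by the score-function (REINFORCE) identity applied to the single factor carrying $\theta_{\scriptscriptstyle X}$, and then to collapse the full-graph expectation down to the frontier by iterated conditioning (Rao--Blackwellization), with the Markov structure of the DAG doing the decisive work. Everything hinges on one structural fact about the set $An_{\scriptscriptstyle X} \cup \{X\}$; the rest is bookkeeping.

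First I would write $J = \mathbb{E}_{\mathcal{X} \sim \mathcal{P}}[f(\mathcal{Z})]$ and use the ancestral factorization $p(\mathcal{X}) = \prod_{W \in \mathcal{X}} p(W \mid Pa_{\scriptscriptstyle W}; \theta_{\scriptscriptstyle W})$ guaranteed by $(\mathcal{G}_\mathcal{X}, \mathcal{P})$. Since each variable carries its own parameter, $\theta_{\scriptscriptstyle X}$ enters this product only through the single factor $p(X \mid Pa_{\scriptscriptstyle X}; \theta_{\scriptscriptstyle X})$. Differentiating under the integral sign (assuming the usual regularity that lets us interchange $\nabla_{\theta_{\scriptscriptstyle X}}$ with the integral) and applying the log-derivative identity $\nabla_{\theta_{\scriptscriptstyle X}} p = p \, \nabla_{\theta_{\scriptscriptstyle X}} \log p$ yields $\nabla_{\theta_{\scriptscriptstyle X}} J = \mathbb{E}_{\mathcal{X}}\big[ \nabla_{\theta_{\scriptscriptstyle X}} \log p(X \mid Pa_{\scriptscriptstyle X}; \theta_{\scriptscriptstyle X}) \cdot f(\mathcal{Z}) \big]$. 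This is the one routine computation, and it is just the graph-valued REINFORCE estimator.

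Next I would invoke the tower property, conditioning on $S := An_{\scriptscriptstyle X} \cup \{X\}$. Because the score factor $\nabla_{\theta_{\scriptscriptstyle X}}\log p(X\mid Pa_{\scriptscriptstyle X};\theta_{\scriptscriptstyle X})$ is measurable with respect to $S$ (it depends only on $X$ and on $Pa_{\scriptscriptstyle X} \subseteq S$), it can be pulled outside the inner conditional expectation, giving $\nabla_{\theta_{\scriptscriptstyle X}} J = \mathbb{E}_{An_{\scriptscriptstyle X}, X}\big[ \nabla_{\theta_{\scriptscriptstyle X}}\log p(X\mid Pa_{\scriptscriptstyle X};\theta_{\scriptscriptstyle X})\cdot \mathbb{E}[f(\mathcal{Z}) \mid S] \big]$. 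It then remains to identify the conditional mean $\mathbb{E}[f(\mathcal{Z})\mid S]$ with $Q_{\scriptscriptstyle X}(Fr_{\scriptscriptstyle S}) = \mathbb{E}[f(\mathcal{Z}) \mid Fr_{\scriptscriptstyle S}]$, i.e.\ to show that conditioning on the whole of $S$ is no more informative about the cost than conditioning on the frontier alone.

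This last identification is the crux, and it is where I expect the real work to lie. The key observation is that $S = An_{\scriptscriptstyle X} \cup \{X\}$ is closed under taking parents, so in the DAG no directed path that leaves $S$ can ever return to it; every node outside $S$ is strictly downstream. Consequently the conditional law of the downstream variables that can reach the cost---and hence of $\mathcal{Z}$---depends on the values in $S$ only through those nodes of $S$ that possess a child outside $S$ lying on a path to the cost, and these are exactly the frontier nodes $Fr_{\scriptscriptstyle S}$ by definition. Formally I would argue that $f(\mathcal{Z})$ is conditionally independent of $S \setminus Fr_{\scriptscriptstyle S}$ given $Fr_{\scriptscriptstyle S}$---a d-separation statement that follows cleanly from the ancestral closure of $S$---so that $\mathbb{E}[f(\mathcal{Z})\mid S] = \mathbb{E}[f(\mathcal{Z})\mid Fr_{\scriptscriptstyle S}] = Q_{\scriptscriptstyle X}(Fr_{\scriptscriptstyle S})$. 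Substituting this back and noting $\mathbb{E}_{\scriptscriptstyle S} = \mathbb{E}_{An_{\scriptscriptstyle X}, X}$ produces exactly the stated identity; the same conditioning step also supplies the variance-reduction inequality $\text{Var}(Q_{\scriptscriptstyle X}) \le \text{Var}(f)$ quoted earlier as the motivation for cost propagation. The care needed is to make the frontier/Markov argument precise rather than heuristic---in particular to justify the d-separation claim from the closure property---while the steps before and after it are essentially bookkeeping.
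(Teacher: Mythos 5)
Your proposal is correct and follows essentially the same route as the paper's proof: the score-function/log-derivative identity applied to the single factor carrying $\theta_{X}$, combined with the tower property and the frontier Markov property to collapse the conditioning set ${An}_{X} \cup \{X\}$ down to ${Fr}_{{An}_{X} \cup \{X\}}$. The only differences are cosmetic — you differentiate first (full-graph REINFORCE) and then Rao--Blackwellize, whereas the paper conditions on the frontier first and then differentiates the explicit sum over $x$ — and your ancestral-closure/d-separation justification of the Markov step is actually more explicit than the paper's one-line appeal to it.
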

\begin{proof}
First, we rewrite the objective function for the SCG $(\mathcal{X},\mathcal{G}_\mathcal{X},\mathcal{P},\Theta,\mathcal{F},\Phi)$ by unfolding the whole expectation computation and splitting it into three parts, with the expectation on $X$ in the middle, as:
\begin{equation}
\begin{split}
J(\Theta,\Phi) & = \mathbb{E}_{{An}_X} \Big[ \mathbb{E}_{X|{Pa}_X} \big[ \mathbb{E}_{\mathcal{Z}|{An}_X \cup \{X\}} [f(\mathcal{Z})] \big] \Big] \\
& = \mathbb{E}_{{An}_X} \Big[ \mathbb{E}_{X|{Pa}_X} \big[ \mathbb{E}_{\mathcal{Z}|{{Fr}_{{An}_X \cup \{X\}}}} [f(\mathcal{Z})] \big] \Big]
\end{split}
\end{equation}
The second line follows the Markov property that given the frontier set ${Fr}_{{An}_X \cup \{X\}}$ the rest ancestors of $X$ have no impact on the cost. Then, we write the conditional distribution function $p(x|{Pa}_{\scriptscriptstyle X}; \theta_{\scriptscriptstyle X})$ explicitly in $J$:
\begin{equation}
J(\Theta,\Phi) = \mathbb{E}_{{An}_X} \Big[ \sum_{x} p(x|{Pa}_{\scriptscriptstyle X};\theta_{\scriptscriptstyle X}) \cdot \big[ \mathbb{E}_{\mathcal{Z}|{{Fr}_{{An}_X \cup \{x\}}}} [f(\mathcal{Z})] \big] \Big]
\end{equation}
Note that we can change the sum into an integral for the continuous case. Then, we can derive the gradient of $J$ with respect to the distribution parameter $\theta_{\scriptscriptstyle X}$ as follows:
\begin{equation}
\begin{split}
\nabla_{\theta_X} J(\Theta,\Phi) & = \mathbb{E}_{{An}_X} \Big[ \sum_{x} \nabla_{\theta_X} p(x|{Pa}_{\scriptscriptstyle X};\theta_{\scriptscriptstyle X}) \cdot \mathbb{E}_{\mathcal{Z}|{{Fr}_{{An}_X \cup \{x\}}}} [f(\mathcal{Z})] \Big] \\
& = \mathbb{E}_{{An}_X} \Big[ \sum_{x} p(x|{Pa}_{\scriptscriptstyle X};\theta_{\scriptscriptstyle X}) \nabla_{\theta_X} \log p(x|{Pa}_{\scriptscriptstyle X};\theta_{\scriptscriptstyle X}) \cdot \mathbb{E}_{\mathcal{Z}|{{Fr}_{{An}_X \cup \{x\}}}} [f(\mathcal{Z})] \Big] \\
& = \mathbb{E}_{{An}_X} \Big[ \mathbb{E}_{X|{Pa}_X} \big[ \nabla_{\theta_X} \log p(x|{Pa}_{\scriptscriptstyle X};\theta_{\scriptscriptstyle X}) \cdot \mathbb{E}_{\mathcal{Z}|{{Fr}_{{An}_X \cup \{X\}}}} [f(\mathcal{Z})] \big] \Big] \\
& = \mathbb{E}_{{An}_X,X} \big[ \nabla_{\theta_X} \log p(x|{Pa}_{\scriptscriptstyle X};\theta_{\scriptscriptstyle X}) \cdot \mathbb{E}_{\mathcal{Z}|{{Fr}_{{An}_X \cup \{X\}}}} [f(\mathcal{Z})] \big]
\end{split}
\end{equation}
The above result is an instance of the REINFORCE \cite{Williams1992} when we apply it to stochastic nodes in an SCG.
\end{proof}

\begin{theorem}
\textbf{(Convergence of learned Q-functions)} Given a Backprop-Q network with one cost as the root, if the expected squared error between each learned $Q_{w_X}$ and its parent $Q_{w_Y}$ can be bounded by $\epsilon$ ($\epsilon > 0$) such that $\mathbb{E}_{An_Y,Y}[(Q_{w_{X}}({Sc}_{\scriptscriptstyle X}) - Q_{w_{Y}}({Sc}_{\scriptscriptstyle Y}))^2] \le \epsilon$, then we have:
\begin{equation}
\mathbb{E}_{An_X,X}\big[\big(Q_{w_{X}}({Sc}_{\scriptscriptstyle X}) - Q_{\scriptscriptstyle X}({Sc}_{\scriptscriptstyle X})\big)^2\big] \le (3\cdot 2^{l_{Q_X}-1} - 2)\epsilon \quad \text{ for } l_{Q_X} \ge 1
\end{equation}
where $Q_{\scriptscriptstyle X}({Sc}_{\scriptscriptstyle X})$ represents the true expected cost and $l_{Q_X}$ the length of the path from $Q_{\scriptscriptstyle X}$ to the root.
\end{theorem}
\begin{proof}
Let $Q_f(\mathcal{Z}):=f(\mathcal{Z})$ be the root, where $f(\mathcal{Z})$ is a cost function defined on random variables $\mathcal{Z}$. For node $Q_{\scriptscriptstyle X}$, we can find a path from it to $Q_f$, denoted as $(Q_{{\scriptscriptstyle X}^{l}},Q_{{\scriptscriptstyle X}^{l-1}},\ldots,Q_{{\scriptscriptstyle X}^{0}})$ where $l$ is the length of the path, $Q_{{\scriptscriptstyle X}^{l}} := Q_{\scriptscriptstyle X}$ and $Q_{{\scriptscriptstyle X}^{0}} := Q_{f}$. Since we know
\begin{equation}
Q_{{\scriptscriptstyle X}^i}({Sc}_{{\scriptscriptstyle X}^i}) = \mathbb{E}_{V^{i-1}|{Sc}_{X^i}} \big[ Q_{{\scriptscriptstyle X}^{i-1}}({Sc}_{{\scriptscriptstyle X}^{i-1}}) \big] = \mathbb{E}_{V^{i-1}|{An}_{X^i},X^i} \big[ Q_{{\scriptscriptstyle X}^{i-1}}({Sc}_{{\scriptscriptstyle X}^{i-1}}) \big]
\end{equation}
where $V^{i-1} = {Sc}_{{\scriptscriptstyle X}^{i-1}} - An_{{\scriptscriptstyle X}^i} \cup \{X^i\}$, we thus can derive the following inequalities:
\begin{equation}
\begin{split}
& \mathbb{E}_{An_{{X}^i},X^i} \Big[\Big(Q_{w_{{X}^i}}({Sc}_{{\scriptscriptstyle X}^i}) - Q_{{\scriptscriptstyle X}^i}({Sc}_{{\scriptscriptstyle X}^i})\Big)^2 \Big] \\
= & \mathbb{E}_{An_{{X}^i},X^i} \Big[\Big(Q_{w_{{X}^i}}({Sc}_{{\scriptscriptstyle X}^i}) - \mathbb{E}_{V^{i-1}|An_{{X}^i},X^i} \big[ Q_{w_{{X}^{i-1}}}({Sc}_{{\scriptscriptstyle X}^{i-1}}) \big] \\
& \qquad \qquad + \mathbb{E}_{V^{i-1}|An_{{X}^i},X^i} \big[ Q_{w_{{X}^{i-1}}}({Sc}_{{\scriptscriptstyle X}^{i-1}}) \big] - \mathbb{E}_{V^{i-1}|{An}_{X^i},X^i} \big[ Q_{{\scriptscriptstyle X}^{i-1}}({Sc}_{{\scriptscriptstyle X}^{i-1}}) \big]\Big)^2 \Big] \\
\le & \mathbb{E}_{An_{{X}^i},X^i} \Big[ 2\Big(Q_{w_{{X}^i}}({Sc}_{{\scriptscriptstyle X}^i}) - \mathbb{E}_{V^{i-1}|An_{{X}^i},X^i} \big[ Q_{w_{{X}^{i-1}}}({Sc}_{{\scriptscriptstyle X}^{i-1}}) \big]\Big)^2 \\
& \qquad \qquad + 2\Big(\mathbb{E}_{V^{i-1}|An_{{X}^i},X^i} \big[ Q_{w_{{X}^{i-1}}}({Sc}_{{\scriptscriptstyle X}^{i-1}}) - Q_{{\scriptscriptstyle X}^{i-1}}({Sc}_{{\scriptscriptstyle X}^{i-1}}) \big]\Big)^2 \Big] \\
\le & 2 \mathbb{E}_{An_{{X}^{i-1}},X^{i-1}} \Big[ \Big(Q_{w_{{X}^i}}({Sc}_{{\scriptscriptstyle X}^i}) - Q_{w_{{X}^{i-1}}}({Sc}_{{\scriptscriptstyle X}^{i-1}}) \Big)^2 \Big] \\
& + 2 \mathbb{E}_{An_{{X}^{i-1}},X^{i-1}} \Big[ \Big( Q_{w_{{X}^{i-1}}}({Sc}_{{\scriptscriptstyle X}^{i-1}}) - Q_{{\scriptscriptstyle X}^{i-1}}({Sc}_{{\scriptscriptstyle X}^{i-1}})\Big)^2 \Big] \\
\le & 2\epsilon + 2 \mathbb{E}_{An_{{X}^{i-1}},X^{i-1}} \Big[ \Big( Q_{w_{{X}^{i-1}}}({Sc}_{{\scriptscriptstyle X}^{i-1}}) - Q_{{\scriptscriptstyle X}^{i-1}}({Sc}_{{\scriptscriptstyle X}^{i-1}})\Big)^2 \Big]
\end{split}
\end{equation}
Let $h_i = \mathbb{E}_{An_{{X}^{i}},X^{i}} \big[ \big( Q_{w_{{X}^{i}}}({Sc}_{{\scriptscriptstyle X}^{i}}) - Q_{{\scriptscriptstyle X}^{i}}({Sc}_{{\scriptscriptstyle X}^{i}})\big)^2 \big]$, indicating the deviation of a learned Q-function from the true one , and then we have the recursion inequality: $h_i \le 2 h_{i-1} + 2 \epsilon$. Because we do not need to approximate the root $Q_f$ which is explicitly known as $f$, we have $h_0 = 0$. Therefore, we can eventually obtain: $h_i \le (3\cdot 2^{i-1} - 2) \epsilon$ for $i\ge 1$.
\end{proof}

\begin{theorem}
\label{merge}
\textbf{(Merging Backprop-Q networks)} Two Backprop-Q networks can be merged at stochastic node $X$ and its ancestors, if the two are fully matched from $X$ through $X$'s ancestors, that is, the set of the incoming edges to each ancestor in a Backprop-Q network is exactly matched to the other.
\end{theorem}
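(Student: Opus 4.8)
The plan is to prove the theorem by exhibiting the merged network explicitly and then verifying that its update rules reproduce the true combined expected cost, with the matching hypothesis doing exactly the work needed to avoid the ``doubling/halving'' pathology described just before the statement. Concretely, suppose the two networks carry root costs $f_1$ and $f_2$. For every node $A$ in the merge region $\{X\}\cup An_{\scriptscriptstyle X}$ I would define the merged Q-function as the sum of the per-cost ones, $Q_{\scriptscriptstyle A}:=Q_{\scriptscriptstyle A}^{f_1}+Q_{\scriptscriptstyle A}^{f_2}$, and I would take the merged update operator to be ``average (or pick one) over the incoming edges carrying a given cost source, then sum over cost sources,'' exactly the rule stated in the text. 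The goal is then to show (i) the scopes $Sc_{\scriptscriptstyle A}^{f_1}$ and $Sc_{\scriptscriptstyle A}^{f_2}$ coincide, so that $Q_{\scriptscriptstyle A}$ is a well-defined single-scope function, and (ii) this operator is consistent, i.e.\ applying it to the children reproduces $Q_{\scriptscriptstyle A}$, so that the merged object is again a legitimate Backprop-Q network representing $\mathbb{E}[f_1+f_2\mid Sc_{\scriptscriptstyle A}]$.

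The heart of the argument is a one-step consistency check carried out edge by edge, using only linearity of expectation, so no delicate induction ordering is required. Fix a node $A$ in the merge region and let $\mathrm{Ch}^{f_1}(A)$ and $\mathrm{Ch}^{f_2}(A)$ be the sets of children of $A$ from which $f_1$, respectively $f_2$, is reachable; these are precisely the sources of the incoming edges of $Q_{\scriptscriptstyle A}$ in the two networks. The matching hypothesis says these edge sets are identical, so write $\mathrm{Ch}(A)$ for the common set. The per-cost rules read $Q_{\scriptscriptstyle A}^{f_c}=\mathrm{avg}_{C\in\mathrm{Ch}(A)}\,\mathbb{E}_{V\mid Sc_{A}}[Q_{\scriptscriptstyle C}^{f_c}]$ (or the analogous ``pick-one'' variant on the same selected edge). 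Because the index set $\mathrm{Ch}(A)$ is shared and expectation is linear, summing over $c$ commutes with the averaging: $Q_{\scriptscriptstyle A}^{f_1}+Q_{\scriptscriptstyle A}^{f_2}=\mathrm{avg}_{C}\,\mathbb{E}[\,Q_{\scriptscriptstyle C}^{f_1}+Q_{\scriptscriptstyle C}^{f_2}\,]$, which is precisely the merged update rule applied to the children (and equals $\mathrm{avg}_{C}\,\mathbb{E}[Q_{\scriptscriptstyle C}]$ whenever a child has itself been merged). This is the exact point where matching is indispensable: if the two children sets differed, the common weight $1/|\mathrm{Ch}(A)|$ could not be factored out simultaneously for both costs, and one would recover the halving of $f_1$ / doubling of $f_2$ flagged in the informal discussion.

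For the scope equality in (i), I would identify frontier membership with the matched edge structure: a variable $B\in An_{\scriptscriptstyle A}\cup\{A\}$ lies in $Fr^{f_c}_{An_A\cup\{A\}}$ exactly when $B$ emits a cost-reachable edge to a child outside $An_{\scriptscriptstyle A}\cup\{A\}$, i.e.\ when $\mathrm{Ch}^{f_c}(B)$ contains such a child. Since the hypothesis matches these sets at every node of the region, the partition of $\mathrm{Ch}(B)$ into inside/outside children is the same for both costs, so the frontiers coincide node by node and the merged scope is well defined. I would also record the base case at $X$ itself, whose children lie outside the region and remain per-cost, where the merged rule is simply $Q_{\scriptscriptstyle X}=Q_{\scriptscriptstyle X}^{f_1}+Q_{\scriptscriptstyle X}^{f_2}$, and note that the ``pick-one'' variant is admissible only because matching lets one select the \emph{same} incoming edge in both networks.

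I expect the main obstacle to be the bookkeeping around multiple equivalent update rules and the precise scope argument, rather than the linearity step. When a node has several children leading to the cost one must check that the freedom to average-or-pick is exercised identically in both networks (guaranteed by the matched edge sets) and propagated coherently; and one must make the frontier characterization rigorous enough that matched edges genuinely force $Sc^{f_1}_{\scriptscriptstyle A}=Sc^{f_2}_{\scriptscriptstyle A}$ even when a cost-reachable path leaves $An_{\scriptscriptstyle X}\cup\{X\}$ and could in principle re-enter it, and when an ancestor has matched cost-children that fall outside the literal region $\{X\}\cup An_{\scriptscriptstyle X}$ and so are still stored in two copies.
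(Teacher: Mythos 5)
Your proposal is correct and follows essentially the same route as the paper's own proof: you define the merged Q-function at each node of the merge region as the per-cost sum, and use the matched incoming-edge sets to argue that the sum-or-average update pattern is identical for both cost sources, so the summed function can be propagated coherently through $X$ and all of $X$'s ancestors. Your write-up is in fact more rigorous than the paper's informal argument---the explicit linearity-of-expectation consistency check and the frontier/scope-equality verification are asserted without calculation in the paper---but the underlying idea is the same.
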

\begin{proof}
This theorem is not difficult to prove. In a Backprop-Q network, the incoming edges to a Q-function node define its update rule. For a stochastic node, e.g., $Y$, if there are two Backprop-Q networks with respect to cost $f_1$ and $f_2$ respectively, both of which go through $Y$, node $Y$ will hold two Q-functions, denoted as $Q_{\scriptscriptstyle Y}^{f_1}$ and $Q_{\scriptscriptstyle Y}^{f_2}$. If the incoming edges to $Q_{\scriptscriptstyle Y}^{f_1}$ and those to $Q_{\scriptscriptstyle Y}^{f_2}$ are exactly the same, then the update rule of $Q_{\scriptscriptstyle Y}^{f_1}$, whether to take the sum or the average, would follow the same pattern as that of $Q_{\scriptscriptstyle Y}^{f_2}$. Therefore, we can treat the two Q-functions as one by summing them, denoted as $Q_{\scriptscriptstyle Y}^{f_1,f_2}$. We wish to propagate the new $Q_{\scriptscriptstyle Y}^{f_1,f_2}$ instead of the two functions $Q_{\scriptscriptstyle Y}^{f_1}$ and $Q_{\scriptscriptstyle Y}^{f_2}$, so that we need to make sure that all $Y$'s ancestor nodes could combine Q-functions in the same way. If the two Backprop-Q networks are fully matched over these ancestors, the merging for the downstream parts of the two networks, starting at $Y$, can occur through all $Y$'s ancestors. Actually, we can start the merging a little bit earlier, from the stochastic node $X$ one step after $Y$ in the SCG.
\end{proof}
\section{Gradient Difference Between Two Locally Fitted Functions}

When using a neural-network-based function approximator $Q_w(x)$ to fit $Q(x)$, we wish to know to what degree $Q_w(x)$ also preserves the first-order derivative information of $Q(x)$. If we could bound the difference of the gradients w.r.t. input $x$ between them, such that $\| \partial{Q_w}/\partial{x} - \partial{Q}/\partial{x} \| \le \epsilon$ for all $x$, and $\epsilon \rightarrow 0$ when $Q_w(x) \rightarrow Q(x)$ for all $x$, we can utilize the gradient of $Q_w(x)$ as well as its function value, and treat $ \partial{Q_w}/\partial{x}$ the same as $\partial{Q}/\partial{x}$. If we consider using the reparameterization trick for continuous or relaxed discrete random variables in some cases, the approximate gradient $\partial{Q_w}/\partial{x}$ can provide useful information even if the true $\partial{Q}/\partial{x}$ is unknown. However, this is not true universally. For example, a zigzag line, which is a piecewise linear function, can be infinitely close to a straight line but still keep its slope staying constant almost everywhere. Therefore, we need to impose some conditions to make the bounded gradient difference converge to zero. First, we propose a reasonable hypothesis on the function behavior of a neural network around a point.
\begin{hypothesis}
The functionality of a neural network $f(x)$ in a local region $x\in \Omega$ can be expressed fully by a family of polynomials $P$ of a finite degree $n$. \end{hypothesis}
This hypothesis assumes that the degree of the non-linearity of a neural network can be bounded locally. Then, we introduce the Bernstein's inequality \cite{Gardner1993_BernsteinInequalities}, which works as the theory basis to bound the gradient difference.
\begin{theorem}
\textbf{(Bernstein's inequality)} Let $P$ be a polynomial of degree $n$ with derivative $P'$. Then,
\begin{equation}
\max_{|z|\le 1} (|P'(z)|) \le n \cdot \max_{|z|\le 1}(|P(z)|)
\end{equation}
\end{theorem}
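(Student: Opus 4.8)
The statement is the classical Bernstein inequality for polynomials on the closed unit disk; the constant $n$ (rather than $n^2$) signals the disk/trigonometric version rather than the Markov inequality on a real interval. The plan is to reduce the bound to its trigonometric form and then prove that form by an interpolation identity.

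First I would reduce to the boundary. Since $P'$ is itself a polynomial, hence entire, the maximum modulus principle gives $\max_{|z|\le 1}|P'(z)| = \max_{|z|=1}|P'(z)|$, and likewise $\max_{|z|\le 1}|P(z)| = \max_{|z|=1}|P(z)| =: M$. Thus it suffices to bound $|P'(z)|$ for $|z|=1$. Next I would pass to trigonometric polynomials: writing $z = e^{i\theta}$ and $g(\theta) := P(e^{i\theta}) = \sum_{k=0}^n a_k e^{ik\theta}$, we obtain a trigonometric polynomial of degree $n$ with $\|g\|_\infty = M$, and differentiating gives $g'(\theta) = i e^{i\theta}P'(e^{i\theta})$, so $|g'(\theta)| = |P'(e^{i\theta})|$. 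Hence the disk inequality is equivalent to the trigonometric Bernstein inequality $\|g'\|_\infty \le n\|g\|_\infty$.

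The core step is establishing this trigonometric inequality, and the route I would take is the M.\ Riesz interpolation formula. It expresses the derivative of a degree-$n$ trigonometric polynomial at any point as a finite combination $g'(\theta) = \sum_{j=1}^{2n}\lambda_j\, g(\theta + \tau_j)$ of its values at the equally spaced nodes $\tau_j = (2j-1)\pi/(2n)$, with weights $\lambda_j = (-1)^{j+1}/\big(4n\sin^2(\tau_j/2)\big)$. Taking absolute values and using $\|g\|_\infty = M$ yields $|g'(\theta)| \le \big(\sum_j |\lambda_j|\big) M$, so the entire argument collapses onto the arithmetic identity $\sum_{j=1}^{2n}|\lambda_j| = n$ (equivalently $\sum_{j=1}^{2n}\sin^{-2}(\tau_j/2) = 4n^2$). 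I expect this coefficient computation — deriving the interpolation identity and summing the closed form of the $|\lambda_j|$ — to be the main obstacle, since it is the only nonroutine piece while everything else is bookkeeping; as a sanity check, for $n=1$ the formula reads $g'(\theta) = \tfrac12 g(\theta+\tfrac\pi2) - \tfrac12 g(\theta+\tfrac{3\pi}{2})$ with $\sum|\lambda_j| = 1$.

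As an alternative to the interpolation formula, one can run a comparison/zero-counting argument: assuming $g'(\theta_0) > n$ for a real $g$ with $\|g\|_\infty \le 1$, compare $g$ against a sinusoid $\sin(n\theta + \gamma)$ of the same sup norm, with the phase $\gamma$ chosen so the two agree at $\theta_0$, and apply Rolle's theorem to show the difference would acquire more than $2n$ zeros per period — contradicting that a nonzero degree-$n$ trigonometric polynomial has at most $2n$ zeros. In that route the delicate point is arranging $\gamma$ and counting the interlacing zeros correctly, which I would regard as the analogue of the coefficient identity above.
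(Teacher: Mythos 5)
The paper does not prove this theorem at all: it is imported as a classical result with a citation (Gardner's survey on Bernstein-type inequalities) and used as a black box to bound the gradient difference between $Q_w$ and $Q$. Your proposal, by contrast, sketches an actual proof, and it is the standard classical one: reduce to the boundary circle by the maximum modulus principle, convert to the trigonometric setting via $g(\theta)=P(e^{i\theta})$ so that $|g'(\theta)|=|P'(e^{i\theta})|$, and establish the trigonometric Bernstein inequality $\|g'\|_\infty\le n\|g\|_\infty$ through the M.~Riesz interpolation formula, whose weights satisfy $\sum_{j=1}^{2n}|\lambda_j|=n$. The outline is correct, and your opening observation is an important one: the constant $n$ forces the complex-disk reading of $\max_{|z|\le 1}$, since on a real interval the sharp constant is Markov's $n^2$ (witnessed by Chebyshev polynomials), so the inequality as stated would be false there. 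Two caveats, neither fatal: (i) the Riesz identity $g'(\theta)=\sum_{j=1}^{2n}\lambda_j\,g(\theta+\tau_j)$ and the coefficient sum $\sum_j|\lambda_j|=n$ are asserted rather than derived --- they can be verified by linearity on the basis $e^{ik\theta}$, $|k|\le n$, and you rightly flag this as the main labor; note the identity holds for complex-valued $g$, which is needed since $P$ may have complex coefficients, so your primary route handles the general case cleanly. (ii) Your alternative zero-counting argument, as you state it, applies to real trigonometric polynomials; to cover the complex-valued $g$ one must add the reduction of applying it to $\mathrm{Re}(e^{i\alpha}g)$ for a suitable phase $\alpha$. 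With the Riesz identity supplied, your argument is complete and, unlike the paper, self-contained.
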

The above polynomial is defined on a scalar variable. The multivariate version of Bernstein's inequality can be found in \cite{Ditzian1992_MultivariateBernstein}. It shows that the magnitude (defined by $L_p$ norm) of the first-order derivative of a polynomial of degree $n$, in a bounded convex region such as $|z|\le 1$, can be bounded by the product of a constant relying on $n$ and the magnitude of the polynomial's value.

For simplicity, we consider a univariate case. Let $f(x) := Q_w(x) - Q(x)$ and we wish to bound $|f'(x) |= | Q_w'(x) - Q'(x) |$ by $|f(x)|=|Q_w(x) - Q(x)|$ for all $x \in [x_0 - \Delta x, x_0 + \Delta x]$. According to the above hypothesis, we can express $f(x)$ in a form of a polynomial of a finite degree $n$: $P(z) := f(x)$ where $z=\frac{x-x_0}{\Delta x}$ and $|z|\le 1$, and then have:
\begin{equation}
|f'(x)| = \Big|\frac{1}{\Delta x}P'(z)|_{z=\frac{x-x_0}{\Delta x}}\Big| \le \frac{n}{\Delta x} \cdot \max_{|z|\le 1}(|P(z)|) = \frac{n}{\Delta x} \cdot \max_{x\in [x_0 - \nabla x, x_0 + \nabla x]}(|f(x)|)
\end{equation}
We view $n/\Delta x$ as a constant $C$. Therefore, if we fit $Q_w(x)$ to $Q(x)$ well enough in a local region, that is $|Q_w(x) - Q(x)| \le \epsilon$ for all $x\in \Omega$, we can bound their gradient difference by $C\epsilon$ everywhere within this local region, converging to zero when $\epsilon \rightarrow 0$.
\section{Reduced Backprop-Q networks for fully-connected-layered SCGs}
\label{sec:reducebpq}
\begin{figure*}
\centering
\includegraphics[width=\textwidth]{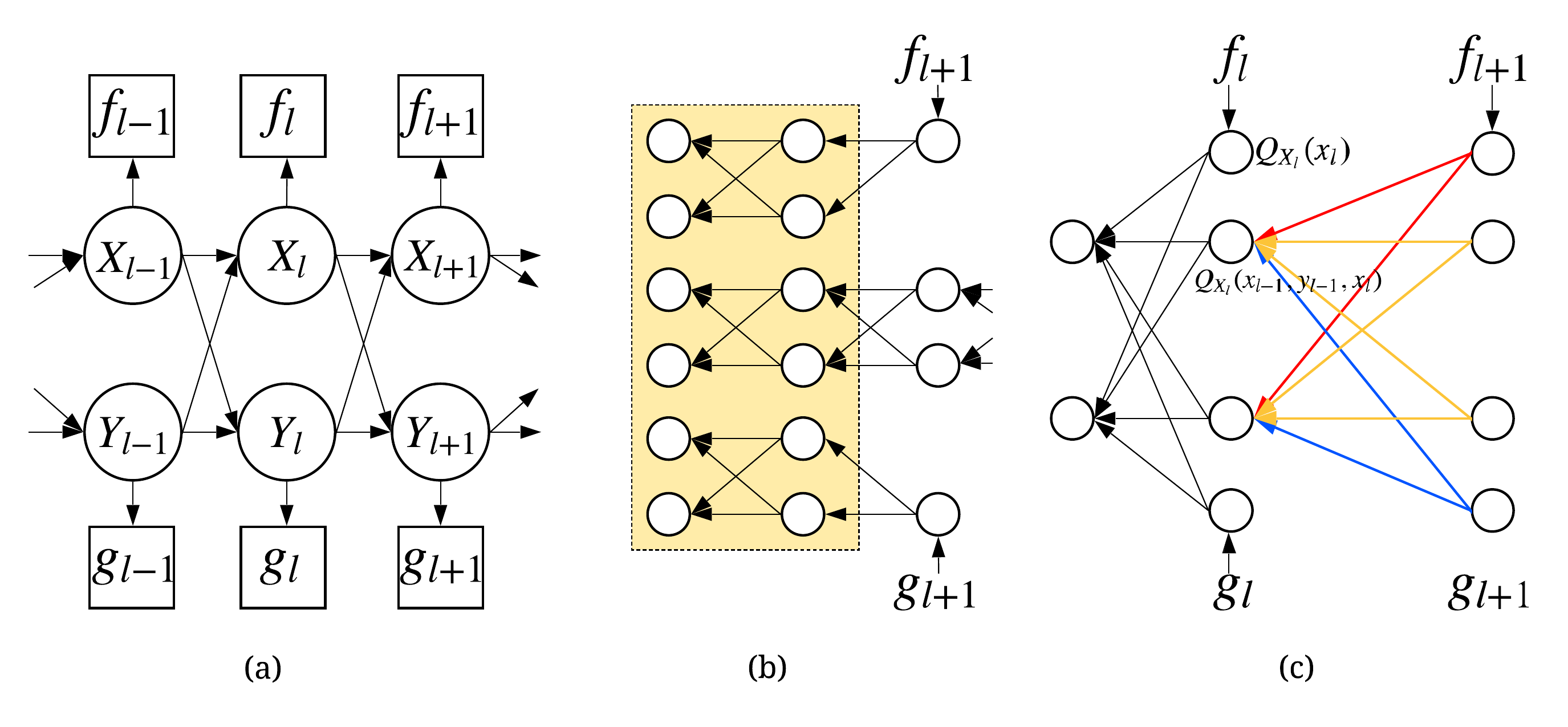}
\vspace{-20pt}
\caption{Reducing Backprop-Q networks for full-connected-layered SCGs}
\vspace{-10pt}
\label{fig-mc-c}
\end{figure*}

Figure \ref{fig-mc-c}(a) shows a multi-layer stochastic system, with each random variable relying on the entire previous stochastic layer, e.g., $X_l \sim p_{{\scriptscriptstyle X}_l}(\cdot|X_{l-1},Y_{l-1})$, and also associated with a cost $f_l(X_l)$. Suppose there are $N$ layers and $M$ nodes per layer, so that $N\cdot M$ costs will provide $N\cdot M$ separate Backprop-Q networks, and a node in layer $t$ needs to maintain $(N-t) M + 1$ Q-functions. However, from Figure \ref{fig-mc-c}(b), we can see that all the Backprop-Q networks rooted in layer $l+1$ and higher layers share the exactly same downstream subgraphs from layer $l$. This means, at each node in layer $l$, we can combine all the Q-functions provided by upstream Backprop-Q networks, into one Q-function like $Q_{{\scriptscriptstyle X}_l}(x_{l-1},y_{l-1},x_l)$. It takes red, yellow and blue incoming edges as shown in Figure \ref{fig-mc-c}(c), representing three different cost sources. Therefore, for each stochastic node $X$, we only need to maintain two Q-functions, one for the immediate cost defined on itself, one for the combined Q-functions from upstream. Further, we do not have to construct a new Q-function for the immediate cost but use it directly. As a result, each stochastic node $X$ only stores one Q-function that is $Q_{\scriptscriptstyle X_l}(x_{l-1},y_{l-1},x_l)$.

\section{Using Techniques from RL for Backprop-Q}

\subsection{Cases for $\lambda$-return Updates}

\begin{figure*}
\centering
\includegraphics[width=\textwidth]{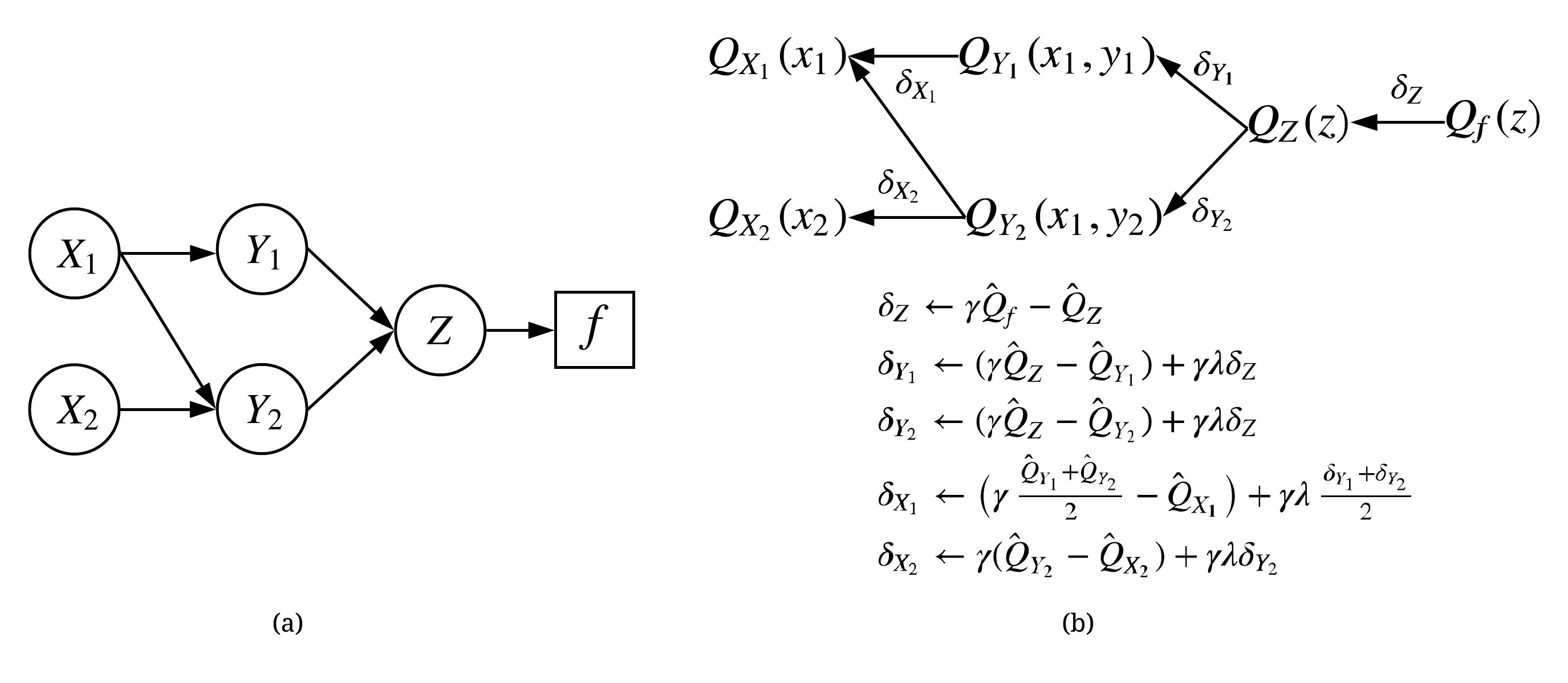}
\vspace{-20pt}
\caption{$\lambda$-return updates on an SCG with one cost}
\vspace{-10pt}
\label{fig-lambda-ret-a}
\end{figure*}

\begin{figure*}
\centering
\includegraphics[width=\textwidth]{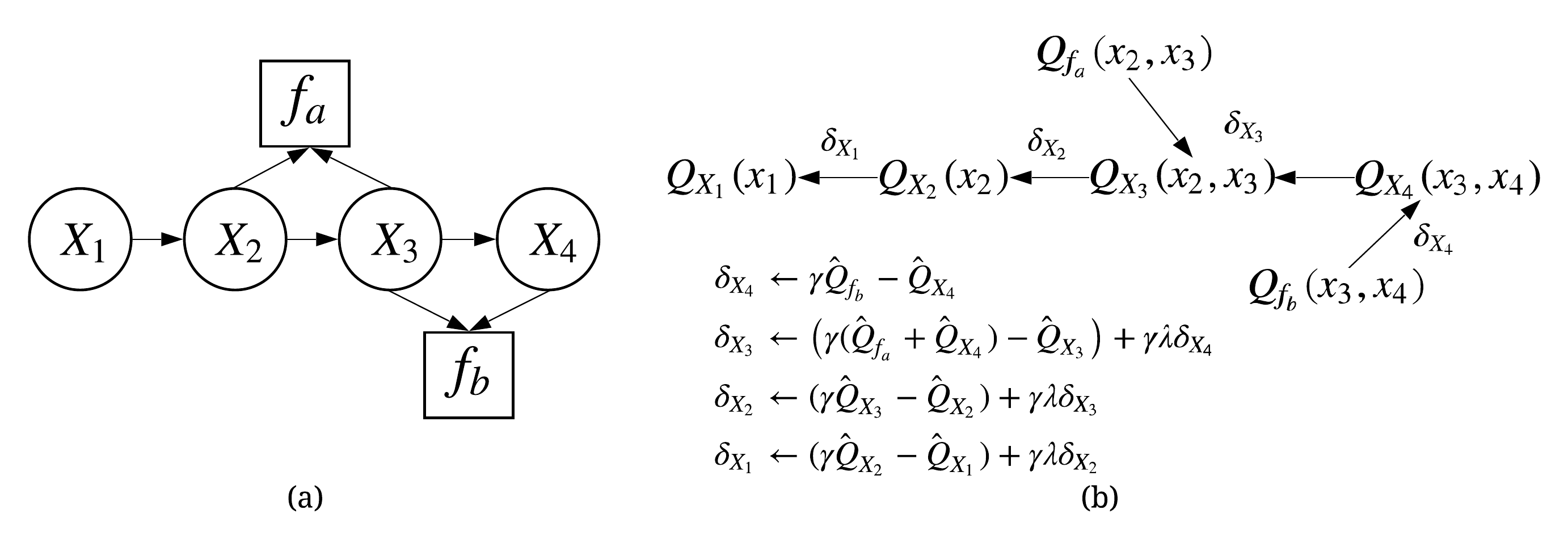}
\vspace{-20pt}
\caption{$\lambda$-return updates on an SCG with two costs}
\vspace{-10pt}
\label{fig-lambda-ret-b}
\end{figure*}

We show two cases to illustrate our graph-based $\lambda$-return method. Figure \ref{fig-lambda-ret-a}(a) is an SCG with one cost. Its corresponding Backprop-Q network is given in Figure \ref{fig-lambda-ret-a}(b), with propagated errors pointing to each node. The start error $\delta_{\scriptscriptstyle Z}$ is computed by $\gamma Q_f(z) - Q_{\scriptscriptstyle Z}(z)$ based on a sample $z$, where $\gamma$ is the discount factor. For simplicity of notation, we use $\hat{Q}_f:=Q_f(z)$ and $\hat{Q}_{\scriptscriptstyle Z}:=Q_{\scriptscriptstyle Z}(z)$. Then, each of the following errors is a sum of the current TD error and its upstream errors, like $\delta_{{\scriptscriptstyle Y}_1} \leftarrow (\gamma \hat{Q}_{\scriptscriptstyle Z} - \hat{Q}_{{\scriptscriptstyle Y}_1}) + \gamma \lambda \delta_{\scriptscriptstyle Z}$, where the second term is weighted by $\gamma \lambda$. For node $Q_{{\scriptscriptstyle X}_1}$, it has two upstream nodes $Q_{{\scriptscriptstyle Y}_1}$ and $Q_{{\scriptscriptstyle Y}_2}$ belonging to the same cost source. Therefore, we compute its TD error based on the averaged update target $(\hat{Q}_{{\scriptscriptstyle Y}_1} + \hat{Q}_{{\scriptscriptstyle Y}_2})/2$, and also average the two upstream errors $(\delta_{{\scriptscriptstyle Y}_1} + \delta_{{\scriptscriptstyle Y}_2})/2$.

The second case is a two-cost SCG in Figure \ref{fig-lambda-ret-b}(a). Before applying $\lambda$-return, we reduce its Backprop-Q networks into a simpler one, by removing edges $Q_{f_a} \to Q^{f_a}_{{\scriptscriptstyle X}_2}$ and $Q_{f_b} \to Q^{f_b}_{{\scriptscriptstyle X}_3}$ and merging the rest, as shown in Figure \ref{fig-lambda-ret-b}(b). The procedure is much like the first case, except that at node $Q_{{\scriptscriptstyle X}_3}$ we sum the two upstream Q-function values instead of averaging them due to different cost sources.

Taking $\delta_{{\scriptscriptstyle X}_1}$ as an example, in the first case, $\delta_{{\scriptscriptstyle X}_1} = \gamma^3 \hat{Q}_f - \hat{Q}_{{\scriptscriptstyle X}_1}$ if $\lambda = 1$, and $\delta_{{\scriptscriptstyle X}_1} = \gamma(\hat{Q}_{{\scriptscriptstyle Y}_1} + \hat{Q}_{{\scriptscriptstyle Y}_2})/2 - \hat{Q}_{{\scriptscriptstyle X}_1}$ if $\lambda = 0$; in the second case, $\delta_{{\scriptscriptstyle X}_1} = \gamma^3 \hat{Q}_{f_a} + \gamma^4 \hat{Q}_{f_b} - \hat{Q}_{{\scriptscriptstyle X}_1}$ if $\lambda = 1$, and $\delta_{{\scriptscriptstyle X}_1} = \gamma\hat{Q}_{{\scriptscriptstyle X}_2} - \hat{Q}_{{\scriptscriptstyle X}_1}$ if $\lambda = 0$. This gives us a more flexible way to make a compromise between bias and variance. When being at an early phase of training, we set $\lambda$ and $\gamma$ close to 1, so that the remote cost signal can propagate backward faster; after training Q-functions for a while, we decrease $\lambda$ a little bit to reduce variance by relying more on learned Q-functions and thus cumulative effect of past experience.

\subsection{Cases for Experience Replay}

\begin{figure*}
\centering
\includegraphics[width=\textwidth]{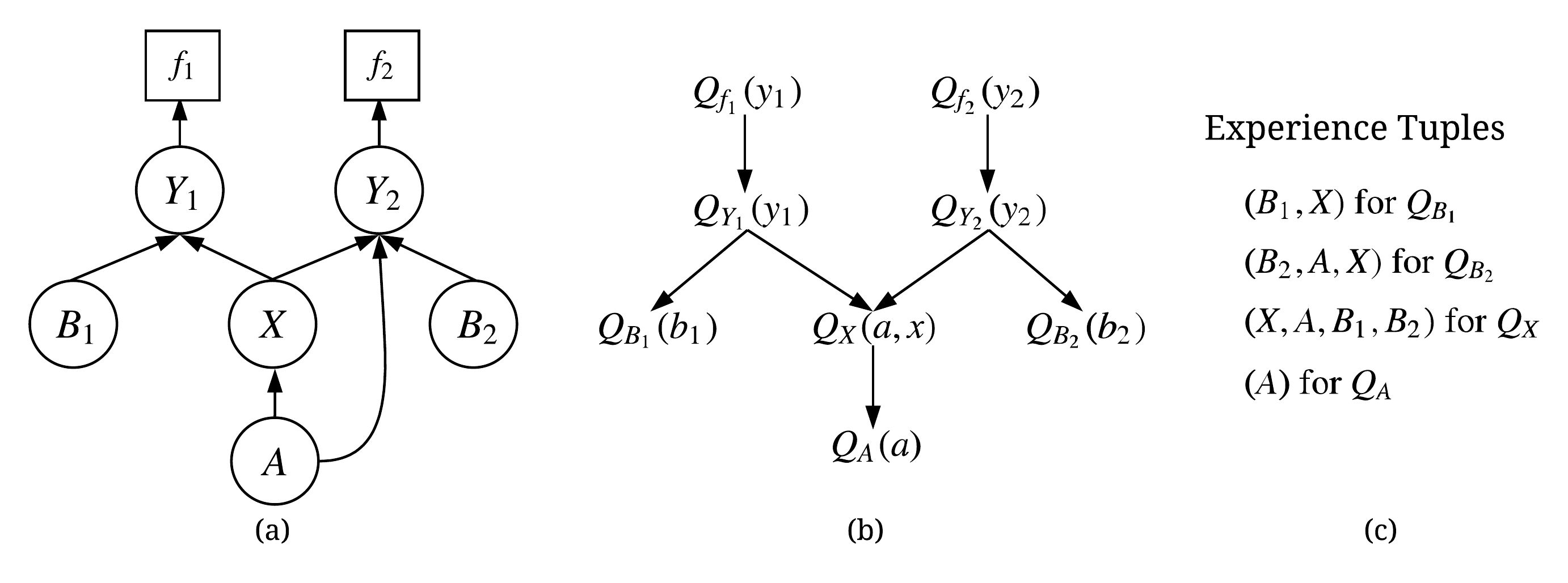}
\vspace{-20pt}
\caption{Graph-based experience replay in an SCG}
\vspace{-10pt}
\label{fig-er}
\end{figure*}

For the purpose of illustration, we consider an SCG with two costs shown in Figure \ref{fig-er}(a). For simplicity, we remove edge $Q_{{\scriptscriptstyle Y}_2}^{f_2} \to Q_{\scriptscriptstyle A}^{f_2}$ to get a simplified Backprop-Q network in Figure \ref{fig-er}(b). We generate and store four types of experience tuples for nodes $Q_{{\scriptscriptstyle B}_1},Q_{{\scriptscriptstyle B}_2},Q_{\scriptscriptstyle X}$ and $Q_{\scriptscriptstyle A}$ respectively, as shown in Figure \ref{fig-er}(c). Taking $Q_{\scriptscriptstyle X}$ as an example, its experience tuple should contain $X$ and $A$ for the scope of $Q_{\scriptscriptstyle X}$, and also include $B_1$ and $B_2$ to generate $Y_1$ and $Y_2$ respectively, together with $X$. Given an experience sample $(x,a,b_1,b_2)$, a sample $y_1$ should be drawn from $p_{{\scriptscriptstyle Y}_1}(\cdot|x,b_1;\theta_{{\scriptscriptstyle Y}_1})$, and $y_2$ drawn from $p_{{\scriptscriptstyle Y}_2}(\cdot|x,b_2;\theta_{{\scriptscriptstyle Y}_2})$, both based on the current policy parameters. This sampling process can be performed multiple times to generate many $(y_1,y_2)$ for training $Q_{w_X}$ by taking gradient steps to minimize:
\begin{equation}
L(w_{\scriptscriptstyle X}) = \frac{1}{n} \sum_{i=1}^n \Big( Q_{w_{Y_1}}(y_1^{(i)}) + Q_{w_{Y_2}}(y_2^{(i)}) - Q_{w_X}(a,x) \Big)^2
\end{equation}

\subsection{Details for Slow-tracking Target.} DQN \cite{Mnih2015_HumanLevel} uses a separate network for generating the targets for the Q-learning updates, which takes parameters from some previous iteration and updates periodically. DDPG \cite{Lillicrap2016_DDPG} uses "soft" target updates rather than directly coping the weights. Our solution, called slow-tracking target, is similar to target updates in DDPG by having the target network slowly track the learned network. This can be applied to both policy parameters and critic parameters when performing experience replay. For each parameter, we maintain $\theta_t$ and $\Delta \theta_t$, where $\theta_t$ represents the parameter of the target network, and $\theta_t + \Delta \theta_t$ represents the parameter of the current learned network. We suppose that the varying of $\theta_t$ is slow while $\Delta \theta_t$ can change drastically. Each time we obtain a new $\Delta$, we add it to $\Delta \theta_t$ as: $\Delta \theta_{t+1} \leftarrow \Delta \theta_t + \Delta$, and then we let $\theta_t$ slowly track the new $\Delta \theta_{t+1}$ as: $\theta_{t+1} \leftarrow \theta_t + \alpha \Delta \theta_{t+1}$ and $\Delta \theta_{t+1} \leftarrow (1-\alpha)\Delta \theta_{t+1}$ with a positive $\alpha \ll 1$.

\begin{figure*}
\centering
\includegraphics[width=\textwidth]{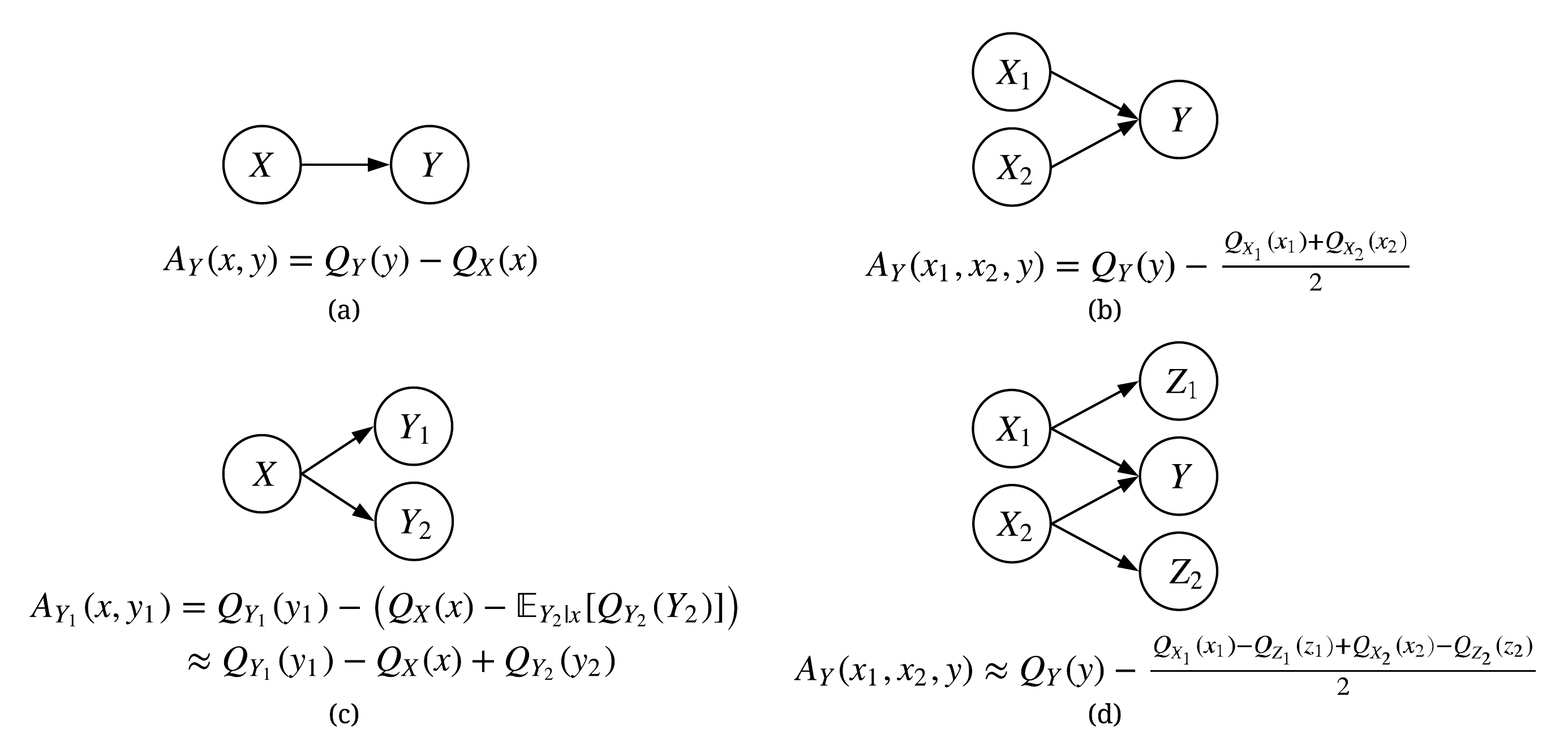}
\vspace{-20pt}
\caption{Computing advantage functions in SCGs}
\label{fig-adv-a}
\end{figure*}

\subsection{Cases for Advantage Functions.} In reinforcement learning, the action-value function summarizes the performance of each action from a given state, assuming it follows $\pi$ thereafter, while the advantage function provides a measure of how each action compares to the average performance at state $s_t$ given by the state-value function. The advantage function is often used to replace the action-value function in the gradient estimator to yield much lower variance \cite{Mnih2016_A3C,Schulman2016_GAE,Gu2017_QProp}. It is viewed as an instance of the baseline method, centering the learning signal and reducing variance significantly. To derive the SCG version of advantage functions, we take Figure \ref{fig-adv-a}(a) as an example. Here, suppose that $Y$ is action and $X$ is state, so that $Q_{\scriptscriptstyle Y}(y)$ represents the performance for taking action $y$, and $Q_{\scriptscriptstyle X}(x) = \mathbb{E}_{{\scriptscriptstyle Y}|x}[Q_{\scriptscriptstyle Y}(Y)]$ represents the average performance for taking all actions at state $x$. Therefore, the advantage function at $Y$ should be $A_{\scriptscriptstyle Y}(x,y) = Q_{\scriptscriptstyle Y}(y) - Q_{\scriptscriptstyle X}(x)$. If $Y$ has two states as in Figure \ref{fig-adv-a}(b), each of $Q_{{\scriptscriptstyle X}_1}(x_1)$ and $Q_{{\scriptscriptstyle X}_2}(x_2)$ gives an evaluation of the average performance at its own state. We thus subtract the two's average from $Q_{\scriptscriptstyle Y}(y)$ to compute the advantage function $A_{\scriptscriptstyle Y}(x_1,x_2,y)$ at $Y$. In Figure \ref{fig-adv-a}(c)(d), the advantage functions become more complex, requiring us to consider other branches. For example, in Figure \ref{fig-adv-a}(c), as $Q_{\scriptscriptstyle X}(x)$ takes the sum of $Q_{{\scriptscriptstyle Y}_1}(y_1)$ and $Q_{{\scriptscriptstyle Y}_2}(y_2)$ as its update target, when computing $A_{{\scriptscriptstyle Y}_1}(x,y_1)$ at $Y_1$, we need to subtract $\mathbb{E}_{{\scriptscriptstyle Y}_2|x}[Q_{{\scriptscriptstyle Y}_2}(Y_2)]$ from $Q_{\scriptscriptstyle X}(x)$. Here, we approximate the advantage function $A_{{\scriptscriptstyle Y}_1}(x,y_1)$ by using a sample $Q_{{\scriptscriptstyle Y}_2}(y_2)$ instead of the expectation computation. In practice, the above advantage functions are not known and must be estimated as we estimate the Q-functions by $Q_w$. One way is to build the approximate advantage functions directly based on $Q_w$, such as $A_{w_Y}(x,y) := Q_{w_Y}(y) - Q_{w_X}(x)$ for the case in Figure \ref{fig-adv-a}(a). Another way is to use $\lambda$-return to estimate the first term so that we can utilize the remote signal in case that $Q_w$ is not accurate yet. Figure \ref{fig-adv-b} shows that the advantage function at $X_t$ can be approximated by the error $\delta_{t-1}$. In the extreme case when $\gamma \lambda = 1$, $A_{{\scriptscriptstyle X}_t}(x_{t-1},x_t)$ reduces to $R - Q_{w_{X_{t-1}}}(x_{t-1})$ where $R$ represents the actual return.

\begin{figure*}
\centering
\includegraphics[width=0.6\textwidth]{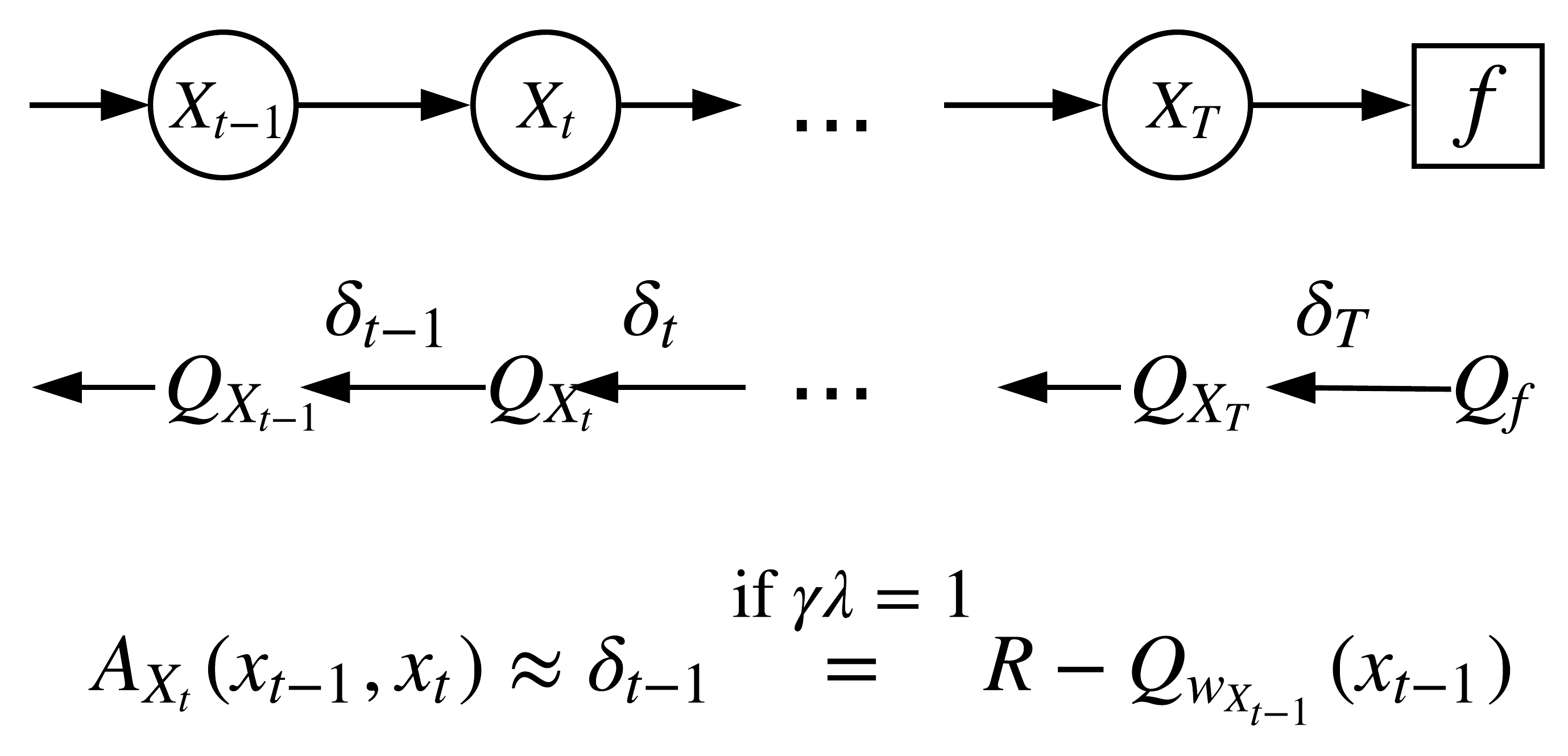}
\vspace{-5pt}
\caption{Approximating advantage functions using $\lambda$-return errors}
\vspace{-10pt}
\label{fig-adv-b}
\end{figure*}

\subsection{Details for Controlled Policy Optimization.}

To avoid an excessively large policy update, TRPO \cite{Schulman2015_TRPO} puts a constraint on the change in the policy at each update, and guarantees policy monotonic improvement with controlled step sizes. It solves a constrained optimization problem on the policy parameters. In the context of SCGs, consider a pair $(X,{Pa}_{\scriptscriptstyle X})$ where $X\sim p(\cdot|{Pa}_{\scriptscriptstyle X}; \theta)$. The constrained optimization problem is:
\begin{equation}
\begin{split}
\min_\theta &\quad \mathbb{E}_{{\scriptscriptstyle X}\sim p(\cdot|{Pa}_{X; \theta_\text{old}})} \bigg[ \frac{p(X|{Pa}_{\scriptscriptstyle X}; \theta)}{p(X|{Pa}_{\scriptscriptstyle X}; \theta_\text{old})} Q_{\scriptscriptstyle X}({Sc}_{\scriptscriptstyle X})\bigg] \\
\text{s.t.} &\quad D_{KL}\Big(p(\cdot|{Pa}_{\scriptscriptstyle X}; \theta) \| p(\cdot|{Pa}_{\scriptscriptstyle X}; \theta_\text{old})\Big) \le \delta
\end{split}
\end{equation}
which can be solved by the conjugate gradient algorithm followed by a line search. The objective function is approximated by drawing multiple samples, while the constraint is approximated by a quadratic approximation using the Fisher information matrix.

PPO \cite{Schulman2017_PPO} introduces a much simpler way to implement the controlled policy optimization. In contrast to TRPO, it uses a clipped surrogate objective without any constraint. Given a pair $(X,{Pa}_{\scriptscriptstyle X})$ in a SCG, we write the objective as:
\begin{equation}
\min_\theta \mathbb{E}_{\scriptscriptstyle X} \Big[ \max \Big( r_{\scriptscriptstyle X}(\theta) Q_{\scriptscriptstyle X}({Sc}_{\scriptscriptstyle X}), \;  \text{clip}\big(r_{\scriptscriptstyle X}(\theta), 1-\epsilon, 1+\epsilon\big) Q_{\scriptscriptstyle X}({Sc}_{\scriptscriptstyle X}) \Big) \Big]
\end{equation}
where $r_{\scriptscriptstyle X}(\theta) = \frac{p(X|{Pa}_{\scriptscriptstyle X};\theta)}{p(X|{Pa}_{\scriptscriptstyle X};\theta_\text{old})}$. The idea behind this is to remove the incentive for moving $r_{\scriptscriptstyle X}(\theta)$ outside of the interval $[1-\epsilon, 1+\epsilon]$.
\section{Using Techniques from VB for Backprop-Q}

\subsection{Graphical Notation}

\begin{figure*}
\centering
\includegraphics[width=\textwidth]{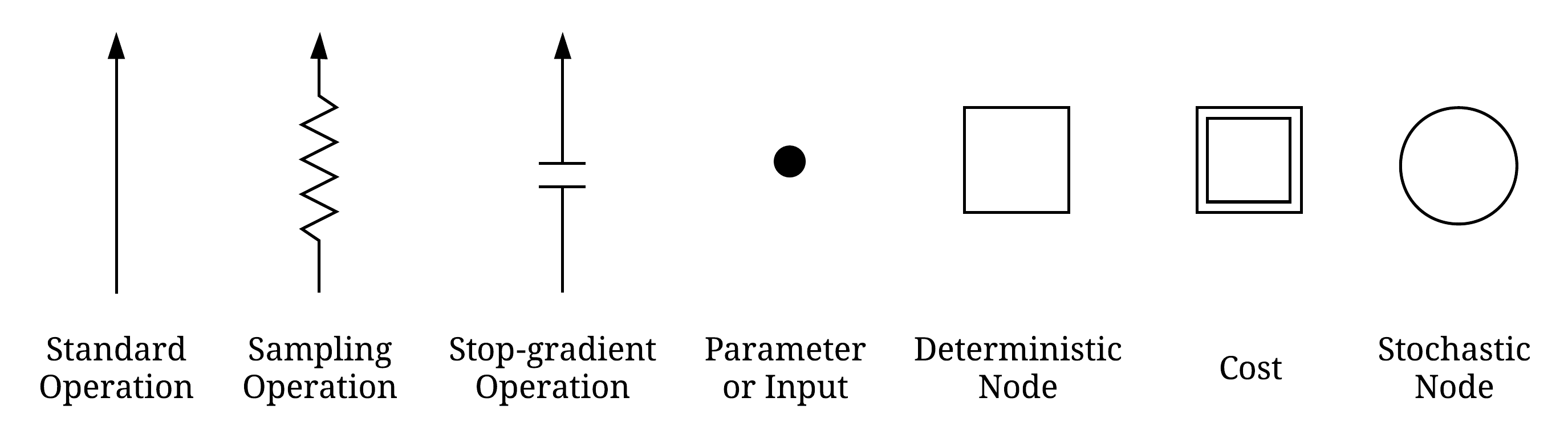}
\vspace{-5pt}
\caption{Graphical Notation}
\label{fig-notation}
\end{figure*}

Here, we list some gradient estimation methods, illustrated in SCGs. To explain it well, we use a new graphical notation that differentiates three types of arrows as shown in Figure \ref{fig-notation}. The arrow, called \emph{standard operation}, represents a normal deterministic computation, producing an output when given an input. The arrow, called \emph{sampling operation}, represents a process of drawing a sample from a distribution. The arrow, called \emph{stop-gradient operation}, is also a deterministic mapping but with no flowing-back gradients permitted. Gradients can only be propagated backward through standard operations if not specified. For the notation of nodes, we use a double-line square to denote a cost node, and the rest follows \cite{Schulman2015_SCG}.

\subsection{REINFORCE / Score-Function / Likelihood-Ratio Estimators}

\begin{figure*}
\centering
\includegraphics[width=0.75\textwidth]{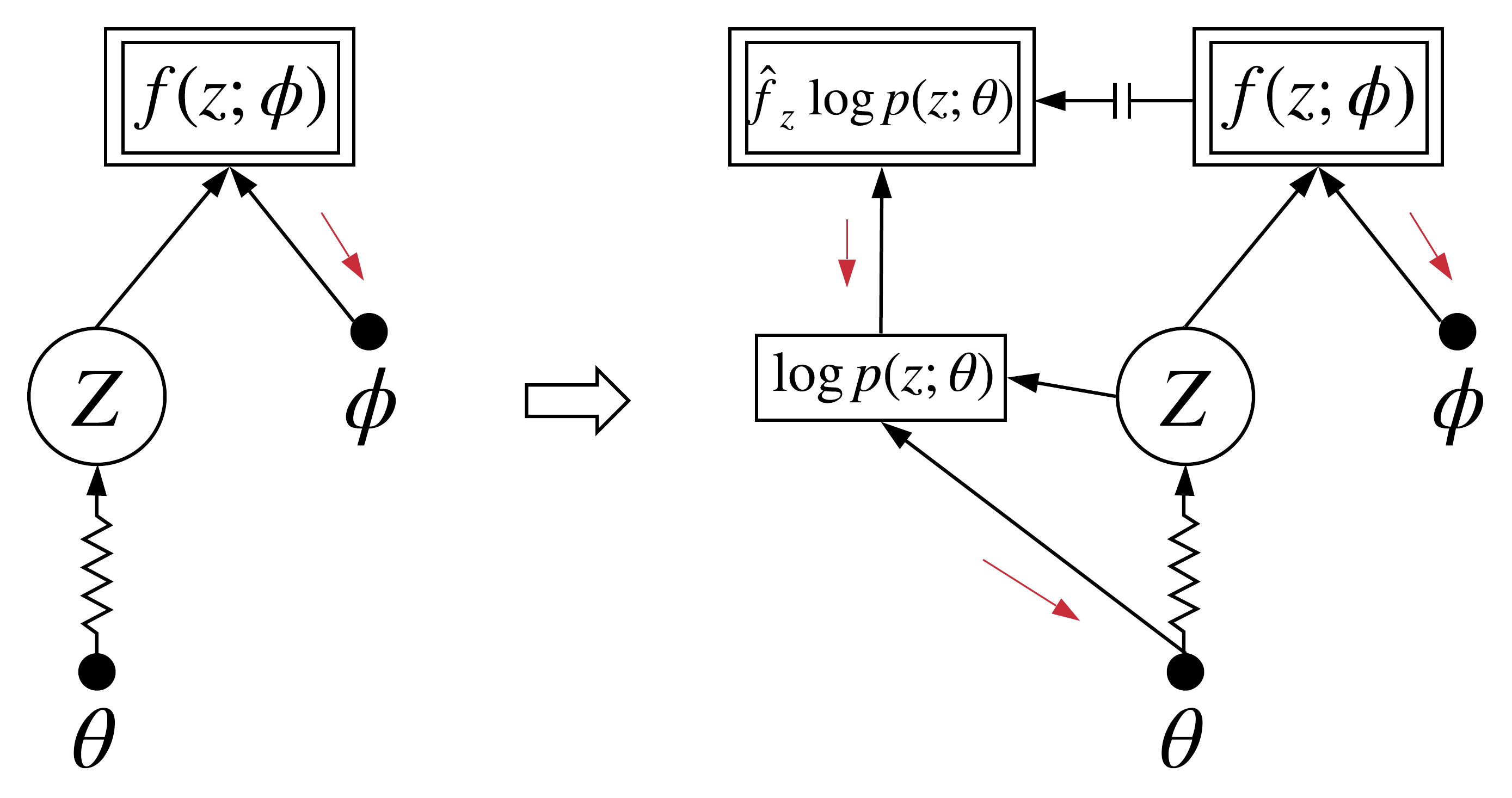}
\vspace{-5pt}
\caption{REINFOCE / Score-Function / Likelihood-Ratio Estimators}
\label{fig-reinforce}
\end{figure*}

The REINFORCE estimator \cite{Williams1992}, also known as the score-function or the likelihood-ratio estimator, has the widest applicability for almost all cases. It does not require $f(z)$ to be differentiable, and only needs a function output to estimate the gradient. Also, it is applicable to both continuous and discrete random variables. We illustrate the REINFORCE estimator using an SCG in Figure \ref{fig-reinforce}. The left graph shows an optimization problem containing a sampling operation, where only $\phi$ can receive a signal of the cost gradient. To send a gradient signal to $\theta$, we have to create a surrogate objective function, $\hat{f}_z \log p(z;\theta)$, taking an output of function $f(z)$, denoted as $\hat{f}_z$, with no gradients allowed to send back to $f(z)$. Then, we build a differentiable path from $\theta$ to the surrogate objective that can propagate the gradient back to $\theta$. Note that as a Monte-Carlo-based method, a forward pass for computing $\hat{f}_z \log p(z;\theta)$ involves a sampling operation to draw $z$ from $\theta$, which is the source of stochasticity and thus brings variance. The REINFORCE gradient estimator w.r.t $\theta$ is written as:
\begin{equation}
\hat{g} := f(z) \frac{\partial}{\partial \theta}\log p(z;\theta)
\end{equation}

\subsection{Control Variates}

\begin{figure*}
\centering
\includegraphics[width=\textwidth]{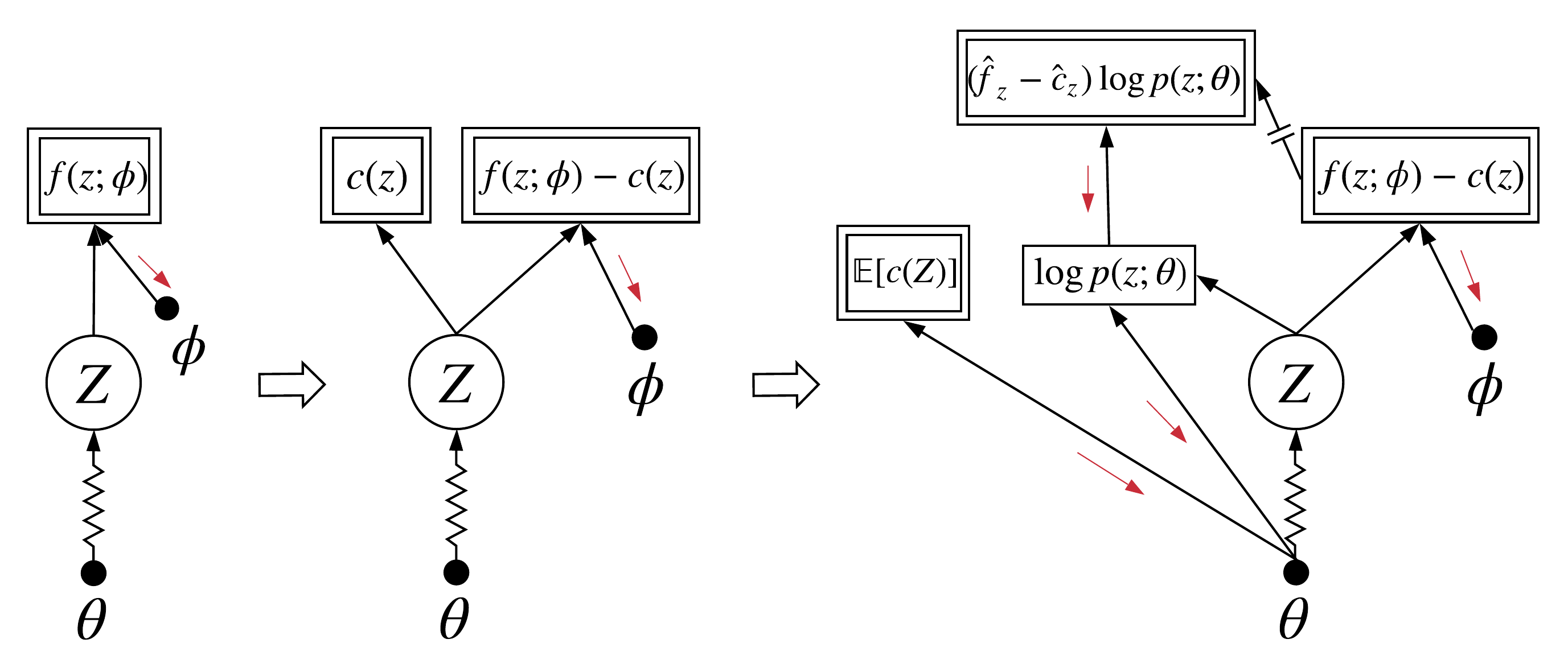}
\vspace{-5pt}
\caption{Control Variates}
\label{fig-cv}
\end{figure*}

Control variates is a variance reduction technique that helps reduce the high variance in the REINFORCE estimator. A carefully designed control variate should be a term highly correlated to $f(z)$, with a closed-form or tractable expectation to correct the bias. Figure \ref{fig-cv} shows how a control variate works to yield an unbiased and low-variance gradient estimator. First, we subtract a term $c(z)$ from $f(z)$ and add the same one aside simultaneously to keep the total cost unbiased. Then, we create a surrogate objective $(\hat{f}_z - \hat{c}_z) \log p(z;\theta)$ the same way as in the previous subsection, reachable from $\theta$ via a differentiable path. Since $\mathbb{E}_{\scriptscriptstyle Z}[c(Z)]$ can be computed analytically, we avoid the operation of sampling $z$ to connect $\theta$ to $\mathbb{E}_{\scriptscriptstyle Z}[c(Z)]$ directly, resulting in no variance when estimating the gradient of this part. To reduce variance, we wish $c(z)$ to be closely correlated to $f(z)$ so that the magnitude of $f(z) - c(z)$ could be as small as possible. There are several ways to design $c(z)$. (1) Let $c$ be a constant, a moving average, or a function that does not rely on $z$ \cite{Mnih2014_NVIL}. Due to $\frac{\partial}{\partial \theta} \mathbb{E}_{\scriptscriptstyle Z}[c(Z)]=0$, we can remove the edge from $\theta$ to $\mathbb{E}_{\scriptscriptstyle Z}[c(Z)]$. This method is often called \emph{baseline}. (2) Let $c(z)$ be the linear Taylor expansion of $f(z)$ around $z=\mathbb{E}_{\scriptscriptstyle Z}[Z]$ \cite{Gu2016_MuProp}:
\begin{equation}
c(z) = f(\mathbb{E}_{\scriptscriptstyle Z}[Z]) + f'(z)\Big|_{z=\mathbb{E}_{\scriptscriptstyle Z}[Z]}( z - \mathbb{E}_{\scriptscriptstyle Z}[Z])
\end{equation}
Then we have:
\begin{equation}
\frac{\partial}{\partial \theta} \mathbb{E}_{\scriptscriptstyle Z}[c(Z)]=f'(z)\Big|_{z=\mathbb{E}_{\scriptscriptstyle Z}[Z]} \frac{\partial}{\partial \theta} \mathbb{E}_{\scriptscriptstyle Z}[Z]
\end{equation}
where $f'(z)\big|_{z=\mathbb{E}_{\scriptscriptstyle Z}[Z]}$ is computed through a deterministic and differentiable mean-field network. Furthermore, to learn a good control variate, we minimize the expected square of the centered learning signal by $\min_c \mathbb{E}_{\scriptscriptstyle Z}[(f(Z)-c)^2]$, or maximize the variance reduction by learning the best scale factor $a$ in $\hat{f}(z) = f(z)-a \cdot c(z)$, so that $\text{Var}(\hat{f})$ has the minimal value when $a = {\text{Cov}(f,c)}/{\text{Var}(c)}$. The gradient estimator with a control variate is written as:
\begin{equation}
\hat{g} := (f(z) - c(z)) \frac{\partial}{\partial \theta}\log p(z;\theta) + \frac{\partial}{\partial \theta}\mathbb{E}_{\scriptscriptstyle Z}[c(Z)]
\end{equation}

\subsection{Reparameterization Trick}

\begin{figure*}
\centering
\includegraphics[width=0.7\textwidth]{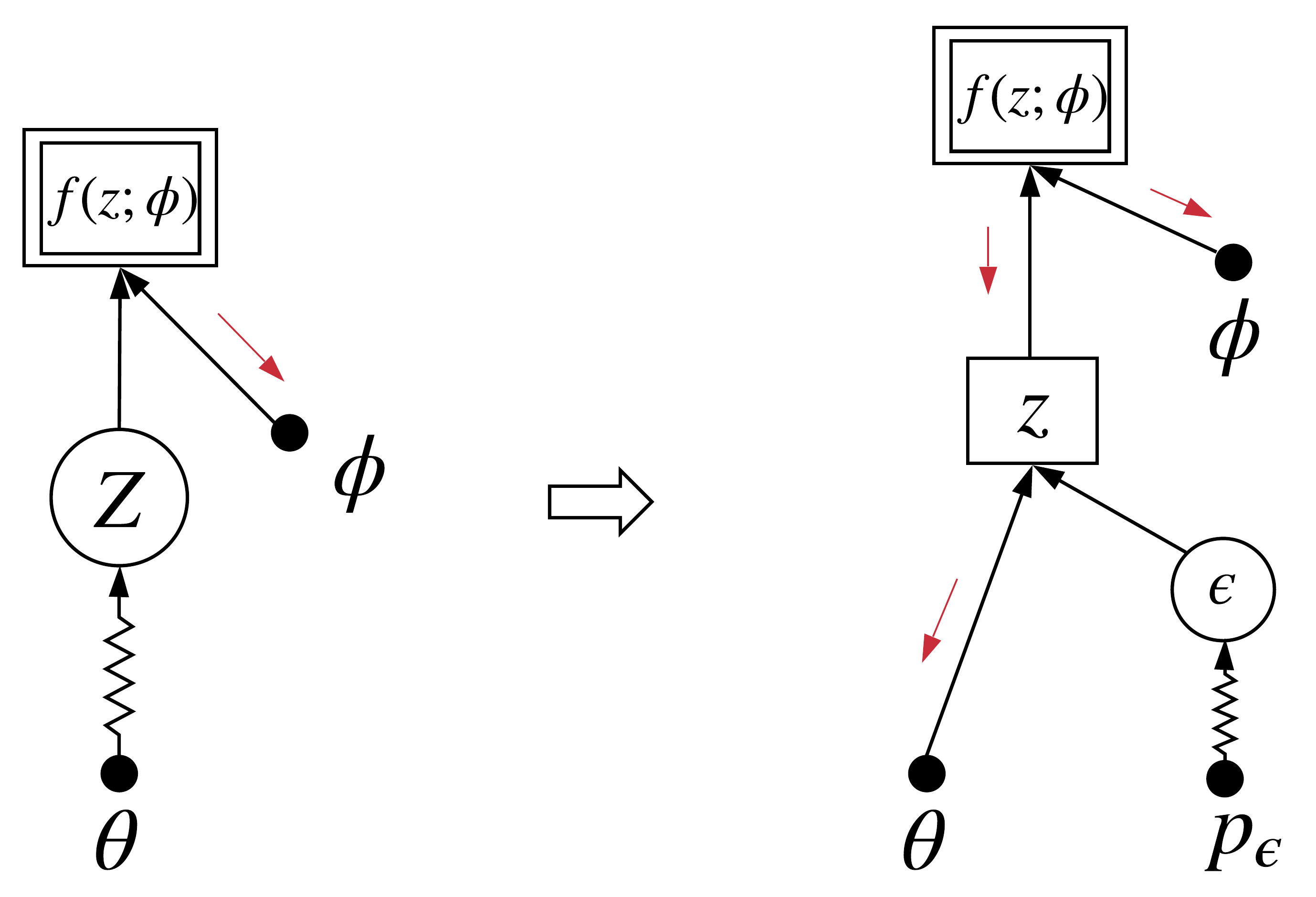}
\vspace{-5pt}
\caption{Reparameterization Trick}
\label{fig-rt}
\end{figure*}

The reparameterization trick is thought to yield an unbiased gradient estimator with lower variance than that of control variates in practice. The intuition is that it takes advantage of the first-order derivative $\partial f / \partial z$, while the control variates method only uses an outcome like $\hat{f}_z$. However, the reparameterization estimator requires $z$ to be continuous and $f(z)$ to be differentiable. Most importantly, we need to find a transformation function $z(\epsilon;\theta)$ where $\epsilon$ is a sample from a fixed known distribution and $\theta$ is the distribution parameter such that $z\sim p(\cdot;\theta)$ can follow exactly the same distribution as before. Therefore, it is typically used with Gaussian distribution \cite{Kingma2014_VAE,Rezende2014_DLGM}. \cite{Ruiz2016_GREP} proposed a generalized reparameterization gradient for a wider class of distributions, but it demands a sophisticated invertible transformation that it is not easy to define. The right graph in Figure \ref{fig-rt} shows a deterministic node of $z$ in place of the stochastic node $Z$, as $z$ is computed by a function of $\theta$ and $\epsilon$ but not sampled directly. Therefore, we can propagate the gradient signal through $z$ to $\theta$. The reparameterization gradient estimator is written as:
\begin{equation}
\hat{g} := \frac{\partial f}{\partial z} \frac{\partial}{\partial \theta}z(\epsilon;\theta)
\end{equation}

\subsection{Continuous Relaxation + Reparameterization Trick}

\begin{figure*}
\centering
\includegraphics[width=\textwidth]{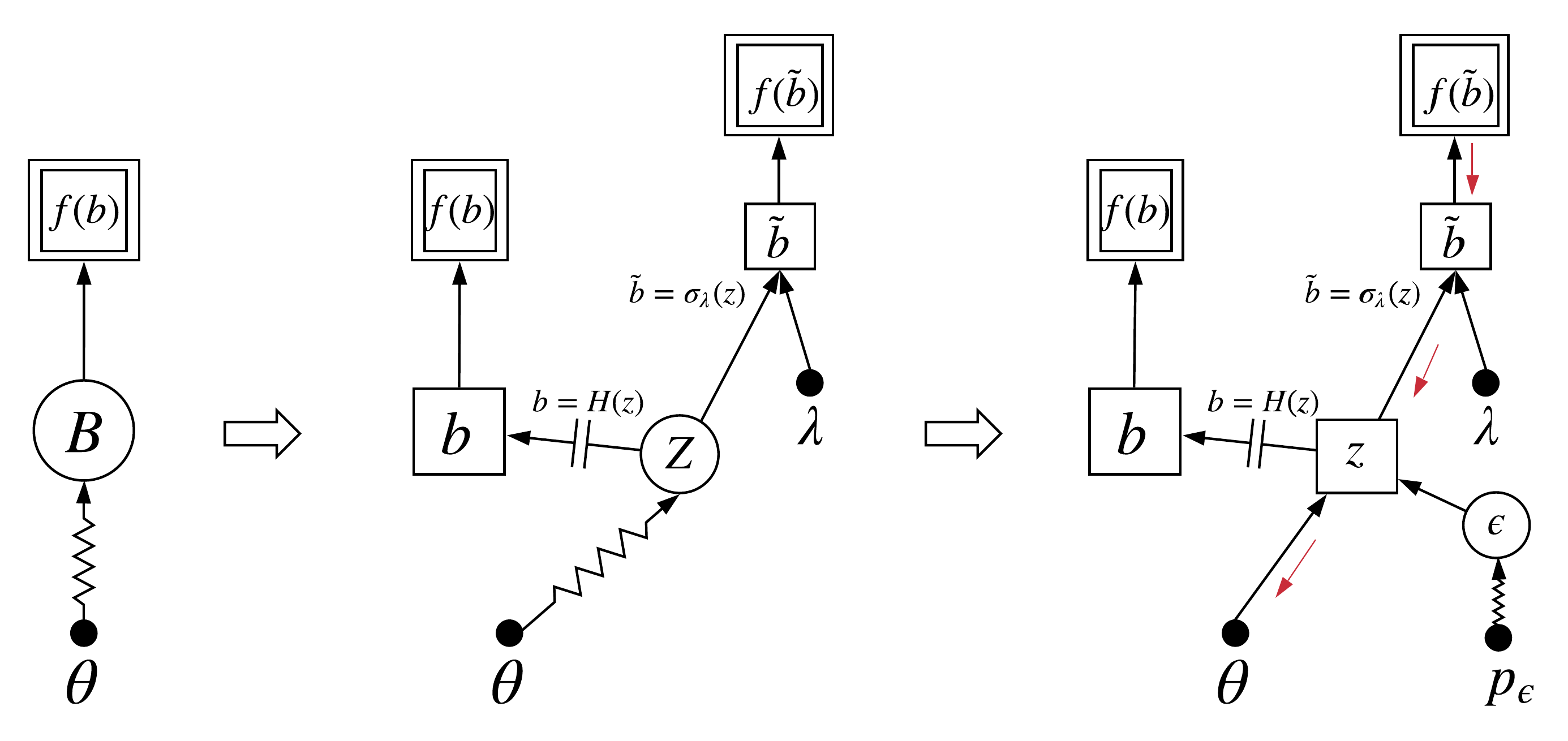}
\vspace{-5pt}
\caption{Combining continuous relaxation with reparameterization trick.}
\label{fig-relaxation}
\end{figure*}

How can we apply the reparameterization trick to discrete random variables, so that we can utilize the gradient information of a cost function to further reduce variance? \cite{Maddison2017_Concrete,Jang2017_GumbelSoftmax} introduced the Concrete distribution and the Gumbel-Softmax distribution respectively to build relaxed discrete models. A discrete random variable can be a binary variable $B\sim \text{Bernoulli}(\theta)$, or a categorical variable $B\sim \text{Categorical}(\theta)$ represented by an one-hot vector. Instead of sampling $b$ directly, we draw a continuous sample $z$ from the Gumbel distribution as shown in Figure \ref{fig-relaxation}. The Gumbel random variable $Z$ can be reparameterized by a transformation function of $\theta$ and a noise $\epsilon$ from the uniform distribution. Then $b$ can be computed through a hard threshold function, $b=H(z)$. However, the threshold function provides zero gradients almost everywhere, blocking any upstream gradient signal. To solve it, we introduce a sigmoid function $\sigma_\lambda(z)$ with a temperature hyperparametr $\lambda$ to produce a relaxed $\tilde{b}$. Instead of minimizing cost $f(b)$, we minimize $f(\tilde{b})$ and open up a differentiable path from $\theta$ to $f(\tilde{b})$, which absolutely brings in biases due to $\tilde{b}$ not being $b$. However, in the low temperature limit when $\lambda \to 0$, we have $\tilde{b} \to b$ and thus obtain an unbiased estimator. The gradient estimator is written as:
\begin{equation}
\hat{g} := \frac{\partial f}{\partial \tilde{b}}\Big|_{\tilde{b}= \sigma_\lambda(z)} \frac{\partial \sigma_\lambda}{\partial z} \frac{\partial}{\partial \theta}z(\epsilon;\theta)
\end{equation}

\subsection{Control Variates + Reparameterization Trick}

\begin{figure*}
\centering
\includegraphics[width=\textwidth]{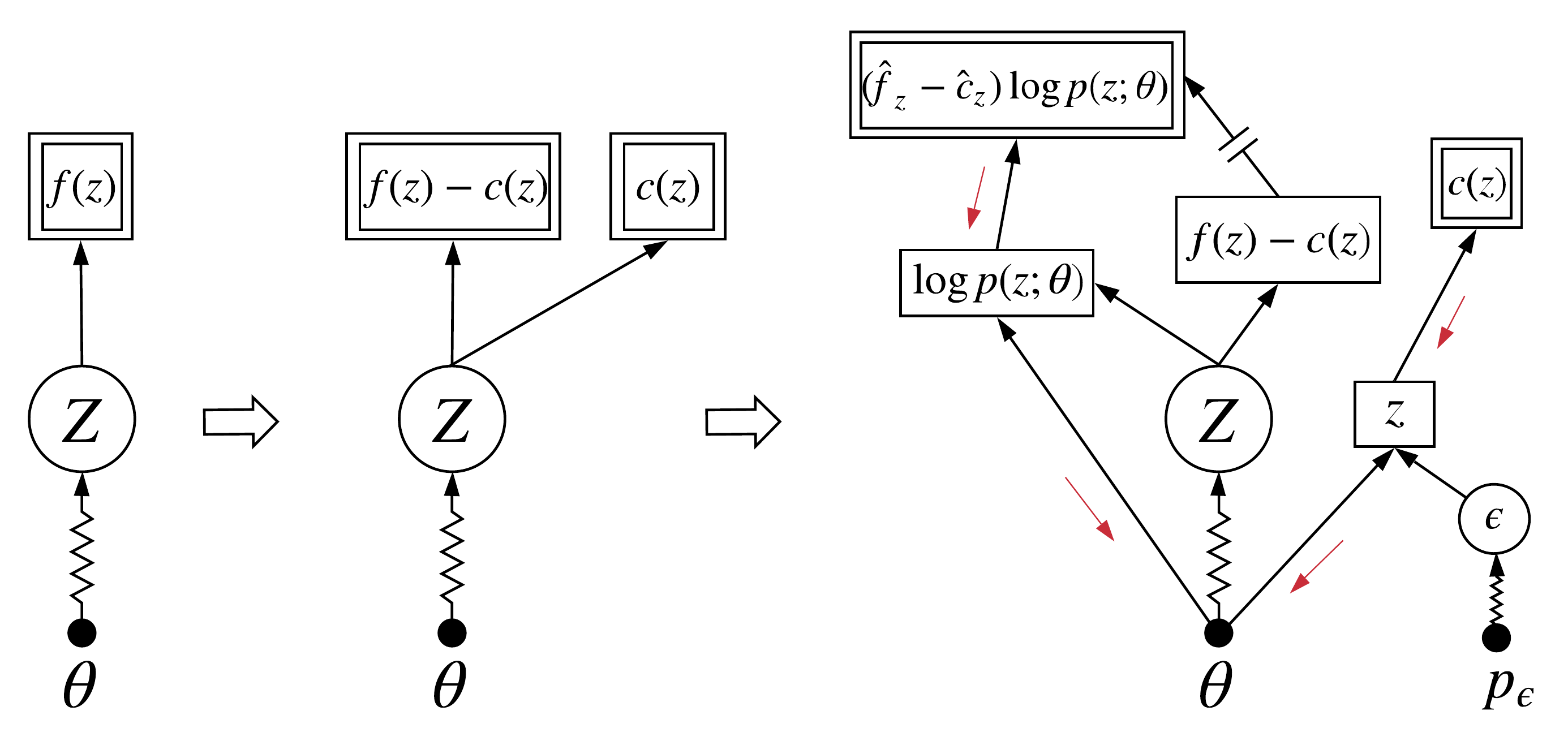}
\vspace{-5pt}
\caption{Combining control variates with reparameterization trick.}
\label{fig-cv-rt}
\end{figure*}

For the control variates method, how well it reduces the variance depends on how correlated to $f(z)$ the control variate is. The effective way is to design a sample-dependent control variate like $c(z)$ rather than a constant or a baseline, so that it can change the value as $z$ is changing, keeping closely correlated to $f(z)$. However, it may introduce bias, so we need  a known mean of $c(z)$ to correct the bias. Unfortunately, that limits the possible forms $c(z)$ can take. Inspired by the reparameterizaton trick, we take a compromise solution that a reparameterization gradient estimator is used in place of the gradient of the true mean. That only requires $c(z)$ to be differentiable and $z$ to be continuous. In practice, the reparameterization estimator usually yields lower variance compared to control variates. Therefore, we provide an unbiased and lower-variance gradient estimator by combining control variates with the repameterization trick \cite{Grathwohl2018_RELAX}.

In Figure \ref{fig-cv-rt}, we suppose $f(z)$ a non-differentiable or even unknown cost function, treated as a black-box function. We can acquire no more information about $f$ than a function output queried by an input $z$. We design a differentiable surrogate $c(z)$ to approximate $f(z)$ and apply it from two aspects: (1) Let $c(z)$ be a control variate, subtracted from $f(z)$ to reduce its variance. (2) Consider that $c(z)$ has its first-order derivative approximate well to that of $f(z)$, so that we can utilize the gradient information with the reparameterization trick, transporting the signal of ${\partial c}/{\partial z}$ from the bias-correction term $c(z)$ through $z$ to $\theta$. Thus, we build two paths from $\theta$ to costs via which the gradient signals can be sent back. The gradient estimator w.r.t. $\theta$ is written as:
\begin{equation}
\hat{g} := (f(z) - c(z))\frac{\partial}{\partial \theta}\log p(z;\theta) + \frac{\partial c}{\partial z} \frac{\partial}{\partial \theta}z(\epsilon;\theta)
\end{equation}
Generally, $c(z;w)$ is parameterized by a neural network with weights $w$ that should be learned as well. We usually turn it into an optimization problem to get a variance-minimizing solution, minimizing $\text{Var}(\hat{g}) = \mathbb{E}[\hat{g}^2] - \mathbb{E}[\hat{g}]^2$. Since $\hat{g}$ is unbiased, we minimize $\mathbb{E}[\hat{g}^2]$ instead, which can be further approximated by $\min_w \mathbb{E}[(f(z) - c(z;w))^2]$, indicating that the best $c(z;w)$ should be learned by fitting $f(z)$.

\subsection{Control Variates + Reparameterization Trick + Continuous Relaxation}

\begin{figure*}
\centering
\includegraphics[width=\textwidth]{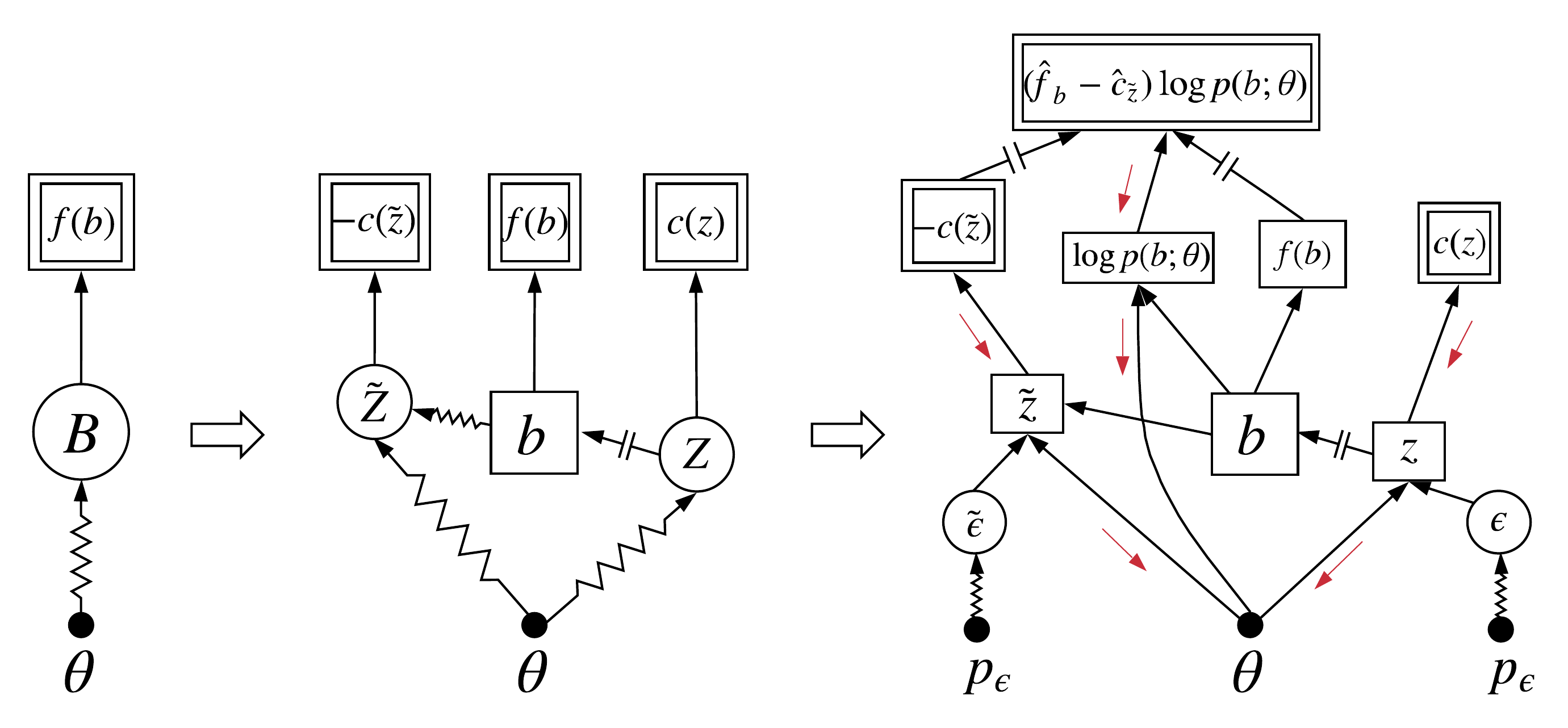}
\vspace{-5pt}
\caption{Combining control variates, reparameterization trick with continuous relaxation.}
\label{fig-cv-rt-relaxation}
\end{figure*}

The technique mentioned in the previous subsection, combining control variates with the reparameterization trick, can also be applied to discrete random variables under continuous relaxation \cite{Grathwohl2018_RELAX,Tucker2017_Rebar}. We have already seen the way to reparameterize a discrete distribution under continuous relaxation, with a temperature hyperparameter $\lambda$ tuned to control the bias. Here, we can derive an unbiased gradient estimator without the need to tune $\lambda$. The unbiasedness is guaranteed by subtracting and adding the same function as shown in Figure \ref{fig-cv-rt-relaxation}. Note that the one used as a control variate, $c(\tilde{z})$, does not have to rely on the same $z$ as in the bias-correction term $c(z)$, because we keep the total cost unbiased in the expectation level as follow:
\begin{equation}
\mathbb{E}_B [f(B)] = \mathbb{E}_{Z} \Big[ \mathbb{E}_{B|Z}\big[f(B) - \mathbb{E}_{\tilde{Z}|B} [c(\tilde{Z})] \big] + c(Z) \Big]
\end{equation}
It shows that $z$ is sampled before knowing $b$, while $\tilde{z}$ is sampled after $b$ is given. In this way, we construct a relaxed $\tilde{z}$ conditioned on $b$, so that $c(\tilde{z})$ can correlate with $f(b)$ more closely to reduce the variance. Here, $Z\sim p(\cdot;\theta)$ follows a prior distribution while $\tilde{Z}\sim p(\cdot|b;\theta)$ follows a posterior distribution, each of which is reparameterized by using a different transformation. Finally, we open up three paths to transport gradient signals back. The gradient estimator is written as:
\begin{equation}
\hat{g} := (f(b) - c(\tilde{z}))\frac{\partial}{\partial \theta}\log p(b;\theta) - \frac{\partial c}{\partial \tilde{z}}\frac{\partial}{\partial \theta}\tilde{z}(\tilde{\epsilon},b;\theta) + \frac{\partial c}{\partial z} \frac{\partial}{\partial \theta}z(\epsilon;\theta)
\end{equation}

\subsection{Gradient Estimators with Q-functions}

We have introduced several advanced approaches for gradient estimation previously. There are two ways to apply these techniques to our learned Q-functions $Q_{w_Z}(z)$ at each stochastic node $Z$ in an SCG. 

(1) We treat $Q_{w_Z}(z)$ as a local cost $f(z)$ and apply the previously introduced approaches directly. If $z$ is continuous, we use the reparameterization trick and define a transformation function $z(\epsilon;\theta)$ where $\epsilon$ is a noise from a fixed distribution. If $z$ is discrete, we introduce a relaxed random variable $z'$ following the Concrete or the Gumbel-Softmax distribution such that $z=H(z')$, and then apply control variates and the reparameterization trick, with $Q_{w_Z}(z')$ as the control variate. We also get a bias-correction term $Q_{w_Z}(\tilde{z}')$ with a different $\tilde{z}'$. However, the gradient estimator here is still biased, as $Q_{w_Z}(z)$ is an approximation to the true Q-function.

(2) From the previous two subsections, we find that $f(z)$ does not have to be a local cost. In fact, we can use an actual return from the remote cost as $f$ though very stochastic with high variance. Then, we change the role of being a local surrogate cost played by $Q_{w_Z}(z)$, and let it act as a control variate to reduce the high variance. Because $Q_{w_Z}(z)$ is learned by fitting the expectation of the remote cost, it is an ideal choice for control variates. We can also define a control variate based on $Q_{w_Z}(z)$, with a scale factor $a$ and a baseline $b$, as $c(z) =  a \cdot Q_{w_Z}(z) + b$, where $a$ and $b$ are acquired by minimizing the variance. With a second term to correct the bias, the unbiased gradient estimator is written as:
\begin{equation}
\hat{g}_{\scriptscriptstyle Z} := \big(R - a Q_{w_Z}(z) - b\big) \frac{\partial}{\partial \theta}\log p(z|{Pa}_{\scriptscriptstyle Z};\theta) + a\frac{\partial}{\partial z}Q_{w_Z}(z) \frac{\partial}{\partial \theta}z(\epsilon;\theta)
\end{equation}
where $R$ represents an actual return. If $z$ is discrete, we apply continuous relaxation the way as (1).
\section{The Big Picture of Backpropagation}

\begin{figure}
\centering
\includegraphics[width=0.7\textwidth]{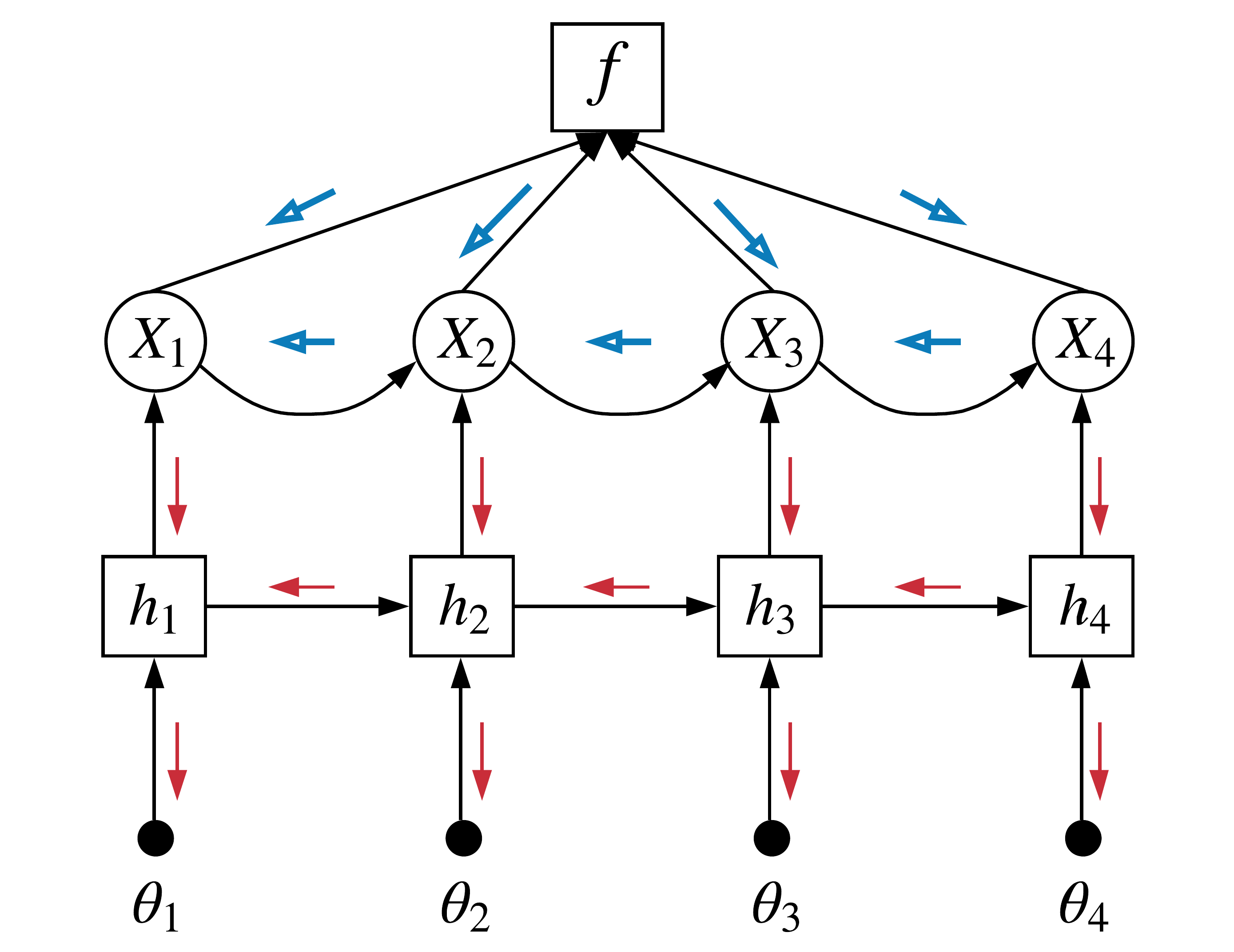}
\vspace{-5pt}
\caption{An illustration for generalized backpropagation. Blue arrows carry the signals of sample-based expected costs, and red arrows carry the signals of gradients.}
\label{fig-big-pic}
\end{figure}

Looking over the panorama of learning in a SCG, we see that the Backprop-Q framework extends backpropagation to a more general level, propagating learning signals not only across deterministic nodes but also stochastic nodes. The stochastic nodes act like repeaters, sending expected costs back over all random variables. Then, each local parameterized distribution, which is a computation subgraph consisting of many deterministic and differentiable operations, takes over the job of backpropgation and then the standard backpropgation starts. Note that these computation subgraphs can overlap by sharing the common parameters with each other. See Figure \ref{fig-big-pic}.
\section{Algorithms}

\renewcommand{\algorithmicrequire}{\textbf{Input:}}
\renewcommand{\algorithmicensure}{\textbf{Output:}}

\begin{algorithm}
\caption{Basic Framework of Backprop-Q (BPQ)}
\label{algo-bpq}
\begin{algorithmic}[1]
\REQUIRE SCG $(\mathcal{X},\mathcal{G}_{\mathcal{X}},\mathcal{P},\Theta,\mathcal{F},\Phi)$, Backprop-Q network $(\mathcal{Q},\mathcal{G}_\mathcal{Q},\mathcal{R})$, a set of approximators $\{Q_{w_X} \mid \forall Q_{\scriptscriptstyle X} \in \mathcal{Q} \}$
\STATE Initialize $(\Theta, \Phi)$ and all $w_{\scriptscriptstyle X}$
\REPEAT
\STATE // A forward pass
\FOR {each $X \in \mathcal{X}$ in a topological order of $\mathcal{G}_{\mathcal{X}}$}
\STATE Sample $x \sim p_{\scriptscriptstyle X}(\cdot|{Pa}_{\scriptscriptstyle X};\theta_{\scriptscriptstyle X})$
\STATE Store $x$ and values computed on deterministic nodes in this forward pass
\ENDFOR
\STATE Compute and store values on each cost node $f \in \mathcal{F}$
\STATE // Backpropagation across stochastic nodes
\FOR {each $Q_{\scriptscriptstyle X} \in \mathcal{Q}$ (excluding $Q_f$) in a topological order of $\mathcal{G}_{\mathcal{Q}}$}
\STATE Get sample update target ${Q}^{\text{tar}} = R^{\text{sample}}_{\scriptscriptstyle X}Q_{w_X}(Sc_{\scriptscriptstyle X})$ by applying the sample-update version of operator $R_{\scriptscriptstyle X} \in \mathcal{R}$ to approxiamtor $Q_{w_X}$ based on current samples
\STATE Take one-step SGD update on $w_{\scriptscriptstyle X}$ by: $w_{\scriptscriptstyle X} \leftarrow w_{\scriptscriptstyle X} + \alpha (Q^{\text{tar}} - Q_{w_X}(Sc_{\scriptscriptstyle X}))\nabla Q_{w_X}({Sc}_{\scriptscriptstyle X})$
\ENDFOR
\STATE // Backpropgation across deterministic nodes
\FOR {each $X \in \mathcal{X}$}
\STATE Sum over all $Q_{w_X}({Sc}_{\scriptscriptstyle X})$ to get a total local cost on $X$
\STATE Construct a local differentiable surrogate objective on $X$ using one of the gradient estimation techniques
\ENDFOR
\STATE Combine all surrogate objectives with cost functions in $\mathcal{F}$ into one
\STATE Run standard backpropagation and take one-step SCG update on $(\Theta, \Phi)$
\UNTIL $(\Theta, \Phi)$ converges
\end{algorithmic}
\end{algorithm}

\end{document}